\newtheorem{theorem}{Theorem}
\newtheorem{proposition}[theorem]{Proposition}
\DeclareMathOperator{\N}{\mathcal{N}}
\crefname{section}{Sec.}{Secs.}
\Crefname{section}{Section}{Sections}
\Crefname{table}{Table}{Tables}
\crefname{table}{Tab.}{Tabs.}
\begin{document}

\title{Dual-Schedule Inversion: Training- and Tuning-Free Inversion for\\ Real Image Editing}

\author{Jiancheng Huang$^{*}$
\and Yi Huang$^{*}$ \and Jianzhuang Liu \and Donghao Zhou \and Yifan Liu \and Shifeng Chen$^{\dagger}$\\
Shenzhen Institute of Advanced Technology, Chinese Academy of Sciences\\
{\tt\small jc.huang@siat.ac.cn}}

\makeatletter
\def\blfootnote{\xdef\@thefnmark{}\@footnotetext}
\makeatother

\maketitle
 \blfootnote{\noindent
$^{\dagger}$ Corresponding author.
 $^{*}$ Equal contribution. }

\begin{abstract}
Text-conditional image editing is a practical AIGC task that has recently emerged with great commercial and academic value. For real image editing, most diffusion model-based methods use DDIM Inversion as the first stage before editing. However, DDIM Inversion often results in reconstruction failure, leading to unsatisfactory performance for downstream editing. To address this problem, we first analyze why the reconstruction via DDIM Inversion fails. We then propose a new inversion and sampling method named Dual-Schedule Inversion. We also design a classifier to adaptively combine Dual-Schedule Inversion with different editing methods for user-friendly image editing. Our work can achieve superior reconstruction and editing performance with the following advantages: 1) It can reconstruct real images perfectly without fine-tuning, and its reversibility is guaranteed mathematically. 2) The edited object/scene conforms to the semantics of the text prompt. 3) The unedited parts of the object/scene retain the original identity. 
\end{abstract}

\section{Introduction}\label{sec:intro}

The recent advancement of diffusion models~\cite{song2019generative,song2020denoising,ho2020denoising,ho2022cascaded,nichol2021glide,yu2022parti,ramesh2022hierarchical,rombach2022high} has led to significant progress in text-to-image generation, and enabled a wide range of applications in AI art, film production~\cite{huang2024magicfight,lv2024gpt4motion}, and advertisement design. However, merely generating an image is often not enough. We may also want to edit an existing image using a text prompt. This task is known as text-conditional image editing~\cite{guo2024focus,song2024doubly,li2024zone,brack2024ledits++,nam2024contrastive,huang2024smartedit,liu2024referring,nam2024dreammatcher,huang2024bk,huang2024entwined,huang2024sbcr,huang2023kv}, which can be accomplished using pre-trained text-to-image diffusion models~\cite{nichol2021glide,hertz2022prompt,tumanyan2022plug,parmar2023zero,cao2023masactrl}. Due to its strong practical usage, this task has become popular and holds significant commercial value.

\begin{figure}[t]
    \centering
    \includegraphics[width=1\linewidth]{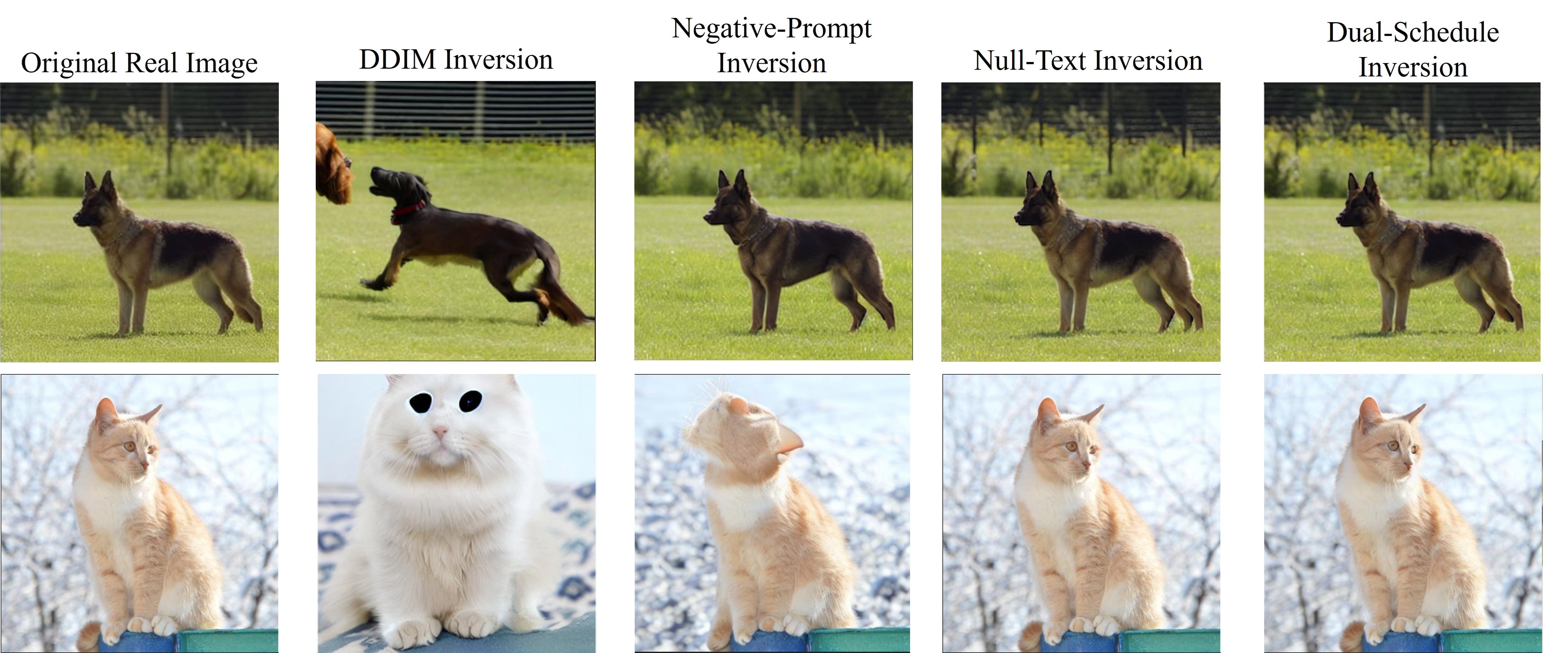}
    \captionof{figure}{
    Examples of reconstruction by different inversion methods \textcolor{black}{with guidance scale $4$}. While Null-Text Inversion requires fine-tuning, the other three methods do not. Dual-Schedule Inversion achieves excellent performance without fine-tuning.
    }
    \vspace{-10pt}
    \label{fig-rec}
\end{figure}

Editing a synthetic image generated by a diffusion model is easier than editing a real image because the initial noisy latent $z_T \sim \mathcal{N}(0,I)$ and the sampling trajectory of this synthetic image are known~\cite{mokady2022null}. For editing a real image via a diffusion model, there are mainly two types of methods, inversion-based and network-tuning-based. In the inversion-based methods~\cite{mokady2022null,miyake2023negative,hertz2022prompt,cao2023masactrl}, the first step is usually to find its initial noisy latent $z_T$, which is called inversion, and then the second step is to edit it with some strategy during the sampling (generation) procedure. The network-tuning-based approaches~\cite{kawar2022imagic,kumari2022customdiffusion,ruiz2022dreambooth,wei2023elite,xiao2023fastcomposer} rely on training or fine-tuning the network for capturing the characteristics of the object/scene in a few images.

Many inversion-based editing methods~\cite{mokady2022null,hertz2022prompt,miyake2023negative,cao2023masactrl} utilize DDIM Inversion~\cite{song2020denoising} for real image reconstruction. DDIM Inversion can obtain an initial noisy latent $z_T$ from a real image $I_s$ by inverting the sampling process, and then use this $z_T$ as the starting point for sampling to obtain a reconstructed image $\hat{I_s}$. With DDIM Inversion, these methods~\cite{mokady2022null,miyake2023negative,hertz2022prompt,cao2023masactrl} first invert the real image into $z_T$, and then generate different contents by modifying the trajectory of $z_t$, $t \in \{T,T-1,...,1\}$, during sampling.
However, DDIM Inversion often fails to reconstruct the original real image~\cite{mokady2022null,miyake2023negative,hertz2022prompt} as shown in Fig.~\ref{fig-rec}. Reconstruction failure makes it difficult to maintain the object/scene consistency between the edited result and the original image.
Several approaches are proposed to address this DDIM Inversion problem. The tuning-based method Null-Text Inversion~\cite{mokady2022null} fine-tunes the unconditional embedding to align the sampling trajectory and the inversion trajectory as closely as possible. However, it requires the extra fine-tuning procedure. The tuning-free methods Negative-Prompt Inversion~\cite{miyake2023negative} and ProxEdit~\cite{han2023improving} argue that the fine-tuning in Null-Text Inversion is unnecessary, but their reconstruction and editing performances are not as good as that of Null-Text Inversion~\cite{miyake2023negative,han2023improving}.

Our work focuses on tuning-free inversion for real image editing without fine-tuning the network or training on a dataset. We first analyze why DDIM Inversion's reconstruction fails mathematically, and then based on this analysis, we propose a tuning-free solution called Dual-Schedule Inversion with inversion and sampling stages. The fundamental difference from DDIM Inversion is that Dual-Schedule Inversion uses two schedules in both stages. We prove that using these two schedules guarantees that the inversion and sampling are perfectly reversible. \textcolor{black}{Moreover, recognizing that different algorithms excel at different editing tasks \cite{huang2024diffusion,huang2024color} (for instance, P2P \cite{hertz2022prompt} is good at object replacement and MasaCtrl \cite{cao2023masactrl} is good at action change), we develop a task classifier that seamlessly integrates our inversion approach with existing algorithms across five widely-used editing tasks. This integration aims to deliver a user-friendly experience by automatically selecting the most appropriate editing algorithm, thus enhancing editing accuracy and contextual preservation efficiently without necessitating manual algorithm selection.}
Our main contributions are summarized as follows. 
\begin{itemize}
\setlength{\itemsep}{0pt}
\setlength{\parsep}{0pt}
\setlength{\parskip}{0pt}
\item We mathematically analyze the reconstruction failure problem in DDIM Inversion and present the reversibility requirement.
\item We propose Dual-Schedule inversion and sampling formulas to ensure the faithful reconstruction of real images, and show how to combine them with other editing methods.
\item \textcolor{black}{For user-friendly editing, we design a classifier to adaptively combine Dual-Schedule \textcolor{black}{Inversion} with different editing methods.}
\item Comprehensive experiments show that Dual-Schedule Inversion achieves superior reconstruction and editing performance incorporated with other editing methods.
\end{itemize}

\section{Related Work}
\subsection{Text-to-Image Generation and Editing}

With the rapid development of diffusion models, text-conditional image generation has experienced an unprecedented explosion~\cite{song2019generative,saharia2022photorealistic,ramesh2022hierarchical,rombach2022high,yu2022parti,huang2023wavedm,zhou2024magictailor}. Diffusion models are a kind of generative models based on nonequilibrium thermodynamics~\cite{sohl2015deep}, which add Gaussian noise associated with a time step to data samples (e.g., images), and train a noise estimation network to predict the noise $\epsilon \sim \mathcal{N}(0,I)$ at different time steps. At the sampling stage after training, the noise estimation network is used to gradually predict and remove the noise until a clean image is finally generated.

Diffusion models for text-conditional image generation align the feature of a textual prompt with image content by a pretrained visual language model, such as CLIP~\cite{li2022clip}. The introduction of textual prompts provides diffusion models with superior and diverse generative capabilities, such as DALL·E 3~\cite{2023dalle3}, LDM~\cite{rombach2022high}, VQ-Diffusion~\cite{gu2022vector}, DALL·E 2~\cite{ramesh2022hierarchical} and GLIDE~\cite{nichol2021glide}. On the basis of image generation, some text-conditional editing methods such as DiffEdit~\cite{couairon2022diffedit}, P2P~\cite{hertz2022prompt}, InstructPix2Pix~\cite{brooks2022instructpix2pix}, PnP~\cite{tumanyan2022plug}, and MasaCtrl~\cite{cao2023masactrl} modify the sampling trajectory of an image for its editing. Besides, many subject-driven image generation methods such as Textual Inversion~\cite{gal2022image}, DreamBooth~\cite{ruiz2022dreambooth}, Custom Diffusion~\cite{kumari2022customdiffusion}, ELITE~\cite{wei2023elite}, and FasterComposer~\cite{xiao2023fastcomposer} can generate different new images of the same subject according to the given subject images, which can also be treated as image editing methods.

\subsection{Diffusion Model Inversion}
Diffusion inversion is to reverse the sampling process, which sequentially corrupts a sample with predicted noises~\cite{song2020denoising}. For real image editing, existing methods mostly utilize DDIM Inversion~\cite{song2020denoising}, which is designed to obtain a deterministic noisy latent $z_T$ from a real image $z_0$, and then uses this $z_T$ as the starting point of the sampling process to modify the trajectory such that a new image is obtained.

However, as mentioned in Sec.~\ref{sec:intro}, reconstruction failure is common in DDIM Inversion, which causes the identity of the unedited parts not to be maintained. In order to solve this problem, some methods~\cite{mokady2022null,ruiz2022dreambooth,gal2022image} choose to fine-tune certain parameters. For instance, Null-Text Inversion~\cite{mokady2022null} aligns inversion and sampling trajectories by fine-tuning the unconditional embedding. Negative-Prompt Inversion~\cite{miyake2023negative} and ProxEdit~\cite{han2023improving} argue that the fine-tuning in Null-Text Inversion~\cite{mokady2022null} is unnecessary and present their tuning-free methods for reconstruction. However, the performances of Negative-Prompt Inversion and ProxEdit are not as good as Null-Text Inversion~\cite{han2023improving,miyake2023negative}. Besides, these methods~\cite{mokady2022null,miyake2023negative,han2023improving} occupy the negative prompt~\cite{ho2021classifier} position for reconstruction, making negative prompts unusable. 

\textcolor{black}{Some recent works propose to modify the inversion and sampling formula. For instance, EDICT~\cite{wallace2023edict} uses affine coupling layers as its inversion and sampling steps. However,~\cite{wallace2023edict} introduces substantial modifications to the sampling formula, which may complicate its integration with other image editing techniques. DDPM Inversion~\cite{huberman2023edit} introduces a DDPM latent noise space for more diversity. Our Dual-Schedule Inversion, on the other hand, emphasizes achieving high-quality image reconstruction by making slight modifications to DDIM inversion and sampling. Thus, editing methods designed for DDIM can also be integrated with our inversion. Moreover, our method's compatibility with various editing techniques makes it adaptable to different editing tasks.}

\textcolor{black}{In Table \ref{tab-type} of our supplementary material, we summarize recent inversion and editing methods. Our Dual-Schedule Inversion is training- and tuning-free, does not occupy the negative prompt position, and is mathematically reversible.}

\section{Background}
\label{sec:background}

\noindent\textbf{Diffusion Model Training.}
The training of a diffusion model starts with a clean sample $z_0$, and a diffusion process is carried out by adding Gaussian noise to $z_0$ as $q(z_{t}\vert z_0)=\N(\sqrt{\alpha_{t}}z_{0},(1-\alpha_{t})I)$,
where $\alpha_t$ is a predefined diffusion schedule, $t\in\{1, 2, ..., T\}$ is the time-step, $z_t$ is the noisy latent, and $z_T \sim \N(0,I)$. In DDPM~\cite{ho2020denoising}, the common setting of $T$ is $1000$. The optimization of the diffusion model is simplified to train a network $\epsilon_{\theta}(z_t,t)$ that predicts the Gaussian noise $\epsilon\sim\N(0,I)$ with this loss: $L_{simple} =  \mathbb{E}_{z_0,t,\epsilon}\Big[\vert\vert{\epsilon} -  \epsilon_{\theta}(z_t,t)\vert\vert^2 \Big]$.

\noindent\textbf{DDIM Sampling and Inversion.}
Given a real image $z_0$, common editing methods~\cite{meng2021sdedit,mokady2022null,cao2023masactrl} first invert this $z_0$ to a $z_T$ by some inversion scheme. Then, they start the sampling process from this $z_T$ and use their editing strategies to generate an edited result $\tilde{z}_0$. Ideally, direct sampling from this $z_T$ without any editing should reconstruct a $\tilde{z}_0$ that is as close to $z_0$ as possible. If this $\tilde{z}_0$ is very different from $z_0$, called reconstruction failure, the corresponding edited image cannot retain the identity of the unedited parts in $z_0$. Therefore, an inversion method that satisfies $\tilde{z}_0 \approx z_0$ is desired. 

We first analyze the commonly used DDIM sampling and inversion, whose sampling equation is as follows:
\begin{equation}\label{ddim sampling}
\small
    z_{t-1} =
\sqrt {\alpha_{t-1}} \frac{z_t - \sqrt{1-\alpha_t}   \epsilon_\theta(z_t,t) }{\sqrt {\alpha_t}}
+ \sqrt{1-\alpha_{t-1}}   \epsilon_\theta(z_t,t),
\end{equation}
which can be rewritten as:
\begin{equation}\label{ddim invert ideal}
\small
z_t =   \sqrt{\alpha_t}  \frac{z_{t-1} - \sqrt{1-\alpha_{t-1}}   \epsilon_\theta(z_{t},t) }{\sqrt {\alpha_{t-1}}}+ \sqrt{1-\alpha_{t}}   \epsilon_\theta(z_{t},t).
\end{equation}
Eq.~\ref{ddim invert ideal} seems to be used as a perfect inversion from $z_{t-1}$ to $z_t$. However, the problem is that $z_t$ is unknown and used as the input to the network $\epsilon_\theta(z_{t},t)$. Thus, DDIM Inversion~\cite{song2020denoising} and several methods~\cite{wallace2023edict, pan2023effective, hong2023exact} assume that $z_{t-1} \approx z_t$, and replace $z_t$ on the right hand side of Eq.~\ref{ddim invert ideal} with $z_{t-1}$, resulting in the following approximation:
\begin{equation}\label{ddim invert}
\small
\begin{aligned}
   z_t = \sqrt{\alpha_t}  \frac{z_{t-1} - \sqrt{1-\alpha_{t-1}}   \epsilon_\theta(z_{t-1},t) }{\sqrt {\alpha_{t-1}}}+\sqrt{1-\alpha_{t}}   \epsilon_\theta(z_{t-1},t).
\end{aligned}
\end{equation}

\noindent\textbf{Text Condition and Classifier-Free Guidance.}
Text-conditional diffusion models aim to generate a result from a random noise $z_T$ with a text prompt $P$. During the sampling process at inference, the noise estimation network $\epsilon_\theta(z_t,t,C)$ is used to predict the noise $\epsilon$, where $C = \psi(P)$ is the text embedding.
The noise in $z_t$ is gradually removed for $T$ steps until $z_0$ is obtained. 

In text-conditional image generation, it is necessary to give enough textual control and influence over the generation. Ho et al. \cite{ho2021classifier} propose classifier-free guidance where conditional and unconditional predictions are combined. Specifically, let $\varnothing = \psi(``")$\footnote[1]{The position for $\varnothing$ is often used by negative prompts such that some attributes do not appear in the generated image.} be the null text embedding and $w$ be the guidance scale. Then the classifier-free guidance prediction is defined by:
\begin{equation}\label{cfg}
\epsilon_\theta(z_t,t,C,\varnothing) = w   \epsilon_\theta(z_t,t,C) + (1-w) \   \epsilon_\theta(z_t,t,\varnothing),
\end{equation}
where $\epsilon_\theta(z_t,t,C,\varnothing)$ is used to replace $\epsilon_\theta(z_t,t)$ in the sampling Eq.~\ref{ddim sampling}, and $w$ is usually in $[1, 7.5]$ \cite{rombach2022high,saharia2022photorealistic}. The higher $w$ means the stronger control by the text.

\noindent\textbf{Image Editing.}
Given a real image $I_s$ and a related text prompt $P_s$, the goal is to generate a new image $I_t$ with the target prompt $P_t$ using a pretrained diffusion model such that $I_t$ aligns with $P_t$. This task is challenging, particularly on real images, as most image editing methods struggle to edit real images while preserving good reconstruction performance~\cite{hertz2022prompt,mokady2022null,tumanyan2022plug}. A popular way used by many editing methods~\cite{cao2023masactrl,hertz2022prompt,miyake2023negative,tumanyan2022plug} is: 1) using DDIM Inversion on the original image $I_s$ to get its $z_T$, and 2) using $P_t$ and this $z_T$ as the starting point to generate $I_t$. However, as shown in Fig.~\ref{fig-rec}, DDIM Inversion often fails to reconstruct the original image, making the identity of the object/scene in $I_t$ changed.

\section{Method}

We first analyze why the reconstruction of a real image via DDIM Inversion fails in Sec.~\ref{sec:fail}. Then, in Sec.~\ref{sec:Dual-Scheduler}, we propose our Dual-Schedule Inversion for inversion and sampling, and show that it mathematically guarantees perfect reconstruction. \textcolor{black}{Finally, we introduce five categories of editing tasks and our task classifier that integrates our inversion approach with existing editing methods.}

\subsection{Irreversibility of DDIM Inversion}\label{sec:fail}

In DDIM sampling~\cite{song2020denoising}, to speed up the time-consuming sampling procedure, $[t_0, t_0+s,t_0+2s,... , t_0+T^{\prime}s]$, instead of $[1, 2, ..., T]$, is chosen where $s$ is the interval and $T^{\prime}<T$. For example, if $t_0=1$, $s=20$, and $T^{\prime}=50$, then the schedule is $[1, 21, 41, ..., 961, 981]$; if $t_0=10$, $s=20$, and $T^{\prime}=50$, then the schedule is $[10, 30, 50, ..., 990]$. In the rest of this paper, as shown in Fig.~\ref{fig-irre}, we use $[t_0, t_0+s, ..., t_0+T^{\prime}s]$ to denote the sampling schedule, and thus the noisy latents during sampling are $\tilde z_{t_0+T^{\prime}s}, ..., \tilde z_{t_0+s}, \tilde z_{t_0}$. In this case, Eq.~\ref{ddim sampling} is rewritten as:
\begin{subequations}
\begin{equation}\label{ddim sample sim}
    \tilde z_{t-s} =  a_{(t\rightarrow t-s)}\tilde z_{t} + b_{(t\rightarrow t-s)}  \epsilon_\theta(\tilde z_{t},t),
\end{equation}
\begin{equation}\label{ba}
\begin{aligned}
  &a_{(t\rightarrow t-s)}=\sqrt{\alpha_{t-s}/\alpha_{t}},\\
      &b_{(t\rightarrow t-s)}= \sqrt{1-\alpha_{t-s}}-\frac{\sqrt{1-\alpha_t}\sqrt{\alpha_{t-s}}}{\sqrt {\alpha_t}},
\end{aligned}
\end{equation}
\end{subequations}
where $b_{(t\rightarrow t-s)}$ and $a_{(t\rightarrow t-s)}$ are constant coefficients from $t$ to $t-s$. Besides, we use $\bar{z}_{t_0}, \bar{z}_{t_0+s}, ..., \bar{z}_{t_0+T^{\prime}s}$ to denote the noisy latents during inversion. Again, we rewrite the approximation Eq. \ref{ddim invert} (DDIM inversion) as:
\begin{subequations}
\begin{equation}\label{ddim invert sim}
    \quad\quad\quad\bar z_t =  a_{(t-s\rightarrow t)}\bar z_{t-s} + b_{(t-s\rightarrow t)}  \epsilon_\theta(\bar z_{t-s},t),
\end{equation}
\begin{equation}\label{ba2}
  \begin{aligned}
  &a_{(t-s\rightarrow t)}=\sqrt{\alpha_{t}/\alpha_{t-s}},\quad\quad\quad\\
      &b_{(t-s\rightarrow t)}= \sqrt{1-\alpha_{t}}-\frac{\sqrt{1-\alpha_{t-s}}\sqrt{\alpha_{t}}}{\sqrt {\alpha_{t-s}}},\quad\quad\quad
  \end{aligned}
\end{equation}
\end{subequations}
where $a_{(t-s\rightarrow t)}$ and $b_{(t-s\rightarrow t)}$ are another two constant coefficients from $t-s$ to $t$. Furthermore, Eq.~\ref{ddim invert sim} can be derived as:
\begin{equation}\label{ddim relate sim}
    \bar z_{t-s} =  a_{(t\rightarrow t-s)}\bar z_{t} + b_{(t\rightarrow t-s)}  \epsilon_\theta(\bar z_{t-s},t).
\end{equation}
Comparing Eq.~\ref{ddim sample sim} and Eq.~\ref{ddim relate sim}, if we want to have the exact inversion $\bar z_{t-s} = \tilde z_{t-s}$, it is necessary that $\bar z_{t} = \tilde z_{t}$ and $\tilde z_t=\bar z_{t-s}$, which lead to $\bar z_{t_0+T^{\prime}s}=\bar z_{0}$, meaning that there is no any diffusion. Therefore, DDIM Inversion is mathematically irreversible. In the next section, we propose a solution.

\subsection{Dual-Schedule Inversion}\label{sec:Dual-Scheduler} 

\noindent\textbf{Reversibility Requirement.} Based on Eq.~\ref{ddim sample sim} and Eq.~\ref{ddim invert sim}, we define the following sampling and inversion formulas:
\begin{equation}\label{what sample sim}
    \tilde z_{t-s} =  a_{(t\rightarrow t-s)}\tilde z_{t} + b_{(t\rightarrow t-s)}  \epsilon_\theta(\tilde z_{\tau},\tau),
\end{equation}
\begin{equation}\label{what invert sim}
    \bar z_t =  a_{(t-s\rightarrow t)}\bar z_{t-s} + b_{(t-s\rightarrow t)}  \epsilon_\theta(\bar z_{\tau},\tau),
\end{equation}
where $a_{(t\rightarrow t-s)}$, $b_{(t\rightarrow t-s)}$, $a_{(t-s\rightarrow t)}$ and $b_{(t-s\rightarrow t)}$ are computed via Eq.~\ref{ba} and Eq.~\ref{ba2}.
We call $\tilde z_{\tau}$ and $\bar z_{\tau}$ satisfying $\tilde z_{\tau}=\bar z_{\tau}$ the \textit{auxiliary latents}. 
\begin{figure}[t]
    \centering
\includegraphics[width=0.9\linewidth]{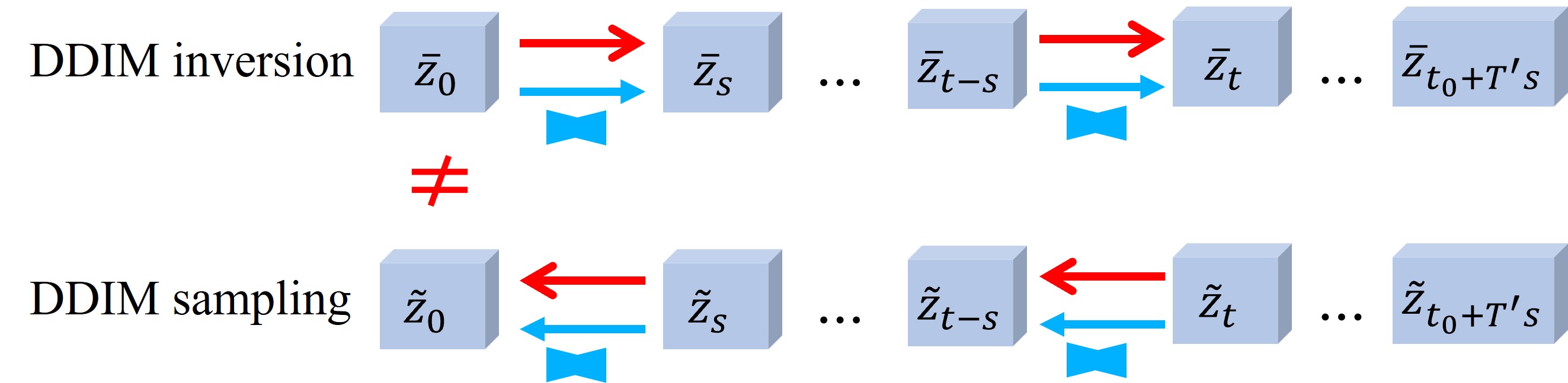}
    \caption{DDIM Inversion is irreversible. }
    \label{fig-irre}
    \vspace{-10pt}
\end{figure}

Similar to Eq.~\ref{ddim relate sim}, Eq.~\ref{what invert sim} can be rewritten as:
\begin{equation}\label{what related sim}
    \bar z_{t-s} =  a_{(t\rightarrow t-s)}\bar z_{t} + b_{(t\rightarrow t-s)}  \epsilon_\theta(\bar z_{\tau},\tau).
\end{equation}
Comparing Eq.~\ref{what sample sim} and Eq.~\ref{what related sim}, we can see that $\tilde z_{t-s}= \bar z_{t-s}$ as long as $\tilde z_{t}= \bar z_{t}$. Therefore, satisfying $\tilde z_{\tau}=\bar z_{\tau}$ and $\tilde z_{t}= \bar z_{t}$ guarantees the perfect reversibility ($\tilde z_{t-s}= \bar z_{t-s}$) in Eq.~\ref{what sample sim} and Eq.~\ref{what invert sim}. Since $\tilde z_{t_0+T^{\prime}s}= \bar z_{t_0+T^{\prime}s}$, now the problem becomes how to find $\tilde z_{\tau}$ and $\bar z_{\tau}$ such that $\tilde z_{\tau}=\bar z_{\tau}$.

\noindent \textbf{Primary and Auxiliary Schedules.}
To obtain the auxiliary latents, we design two schedules, as shown in Fig.~\ref{fig:main_arch}(a), where the upper and lower parts are called the \textit{primary schedule} and the \textit{auxiliary schedule}, respectively. 

The primary time schedule is, e.g., $[1,21,41,...,981]$ with $T^{\prime}=50$. In these $50$ steps, we use $\{z^{p}_t\}$ to denote the \textit{primary latents} $[z^{p}_{1},z^{p}_{21}, z^{p}_{41}, ..., z^{p}_{981}]$. Given these time-steps of the primary schedule, we also design an auxiliary schedule $[10,30,50,...,970]$. This auxiliary schedule is for obtaining the auxiliary latents $z_\tau=\tilde z_\tau = \bar z_\tau$, where $\tau \in \{10,30,50,...,970\}$. For example, between time-steps $21$ and $41$ in the primary schedule, $\tau$ is chosen as the midpoint $30$. We use $\{z^{a}_t\}$ to denote the auxiliary latents $[z^{a}_{10},z^{a}_{30}, z^{a}_{50}, ..., z^{a}_{970}]$. 

\noindent \textbf{Dual-Schedule Inversion.}
As shown in Fig.~\ref{fig:main_arch}(a), we design the iterative inversion of the primary schedule as follows where $\bar z^p_t$ and $\bar z^a_t$ denote the latents during inversion:
\begin{equation}\label{invert odd}
\begin{aligned}
\bar z^{p}_{t} = &a_{(t-20\rightarrow t)}\bar z^p_{t-20} + b_{(t-20\rightarrow t)}  \epsilon_\theta(\bar z^a_{t-11},t-11),
\end{aligned}
\end{equation}
where $t=[21, ..., 961, 981]$. Eq.~\ref{invert odd} is one implementation of Eq.~\ref{what invert sim} and $\bar z^a_{t-11}$ is the auxiliary latent from $\bar z^p_{t-20}$ to $\bar z^p_{t}$. 

Also, for the iterative inversion of the auxiliary schedule, we have the inversion formula:
\begin{equation}\label{invert even}
\begin{aligned}
    \bar z^{a}_{t} = &a_{(t-20\rightarrow t)}\bar z^a_{t-20} + b_{(t-20\rightarrow t)}  \epsilon_\theta(\bar z^p_{t-9},t-9),
\end{aligned}
\end{equation}
where $t=[30, ..., 950, 970]$. In Eq.~\ref{invert odd} and Eq.~\ref{invert even}, $\bar z_1^{p}$ and $\bar z_{10}^{a}$ are obtained by the original forward process of DDIM since they are the starting of the schedule. This calculation does not affect our reversibility, which is proved in the supplementary material.

\noindent \textbf{Dual-Schedule Sampling.} To reconstruct $I_s$ perfectly, as shown in Fig.~\ref{fig:main_arch}(b), we devise a sampling formula Eq.~\ref{sampling odd} for the primary schedule, where $\tilde z^p_t$ and $\tilde z^a_t$ denote the latents during sampling:
\begin{equation}\label{sampling odd}
\begin{aligned}
\tilde z^p_{t-20} = & a_{(t\rightarrow t-20)}\tilde z^p_{t} + b_{(t\rightarrow t-20)}  \epsilon_\theta(\tilde z^a_{t-11},t-11),
\end{aligned}
\end{equation}
where $t=[981, 961, ..., 21]$. Eq.\ref{sampling odd} is one implementation of Eq.~\ref{what sample sim}. Comparing Eq.~\ref{sampling odd} and Eq.~\ref{invert odd} with Eq.~\ref{what sample sim} and Eq.~\ref{what invert sim}, since $\tilde z^p_{t_0+T^{\prime}s}= \bar z^p_{t_0+T^{\prime}s}$, it is obvious that we can guarantee $\tilde z^p_{t-20}=\bar z^p_{t-20}$ as long as $\tilde z^a_{t-11}=\bar z^a_{t-11}$. For obtaining $\tilde z^a_{t-11}=\bar z^a_{t-11}$, given $\tilde z^a_{t_0+T^{\prime}s}= \bar z^a_{t_0+T^{\prime}s}$, we design the sampling formula for the auxiliary schedule:
\begin{equation}\label{sampling even}
\begin{aligned}
\tilde z^a_{t-20} = & a_{(t\rightarrow t-20)}\tilde z^a_{t} + b_{(t\rightarrow t-20)}  \epsilon_\theta(\tilde z^p_{t-9},t-9),
\end{aligned}
\end{equation}
where $t=[970, 950, ..., 30]$. Eq.~\ref{invert even} and Eq.~\ref{sampling even} are also one implementation of Eq.~\ref{what invert sim} and Eq.~\ref{what sample sim}, respectively.

\noindent\textbf{Reversibility.} As stated in the beginning of this section, the reversibility requirement is $\tilde z_{\tau}=\bar z_{\tau}$ and $\tilde z_{t}=\bar z_{t}$. 
 
 Given $\tilde z^p_{981}=\bar z^p_{981}$ and $\tilde z^a_{970}=\bar z^a_{970}$, we have $\tilde z^p_{961}=\bar z^p_{961}$ according to Eq.~\ref{invert odd} and Eq.~\ref{sampling odd}. Then, given $\tilde z^p_{961}=\bar z^p_{961}$ and $\tilde z^a_{970}=\bar z^a_{970}$, we have $\tilde z^a_{950}=\bar z^a_{950}$ according to Eq.~\ref{invert even} and Eq.~\ref{sampling even}. Repeating this process, our Dual-Schedule Inversion can finally guarantee perfect reversibility $\tilde z^p_{1} = \bar z^p_{1}$, as illustrated in Fig.~\ref{fig:main_arch} (a) and (b).

\begin{figure*}[t]
    \centering
    \includegraphics[width=0.99\linewidth]{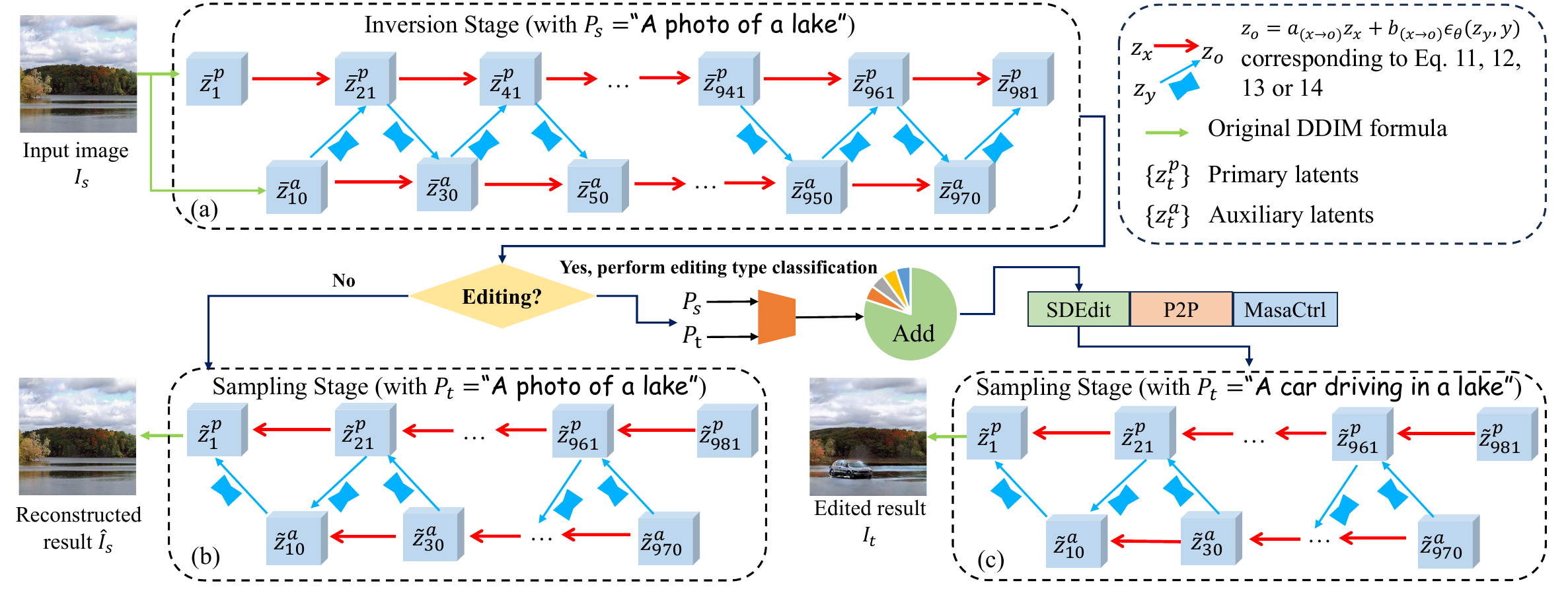}
    \caption{Pipeline of Dual-Schedule Inversion, which is divided into inversion and sampling stages. The inversion stage (a) is for getting $\bar z^p_{981}$ and $\bar z^a_{970}$. The sampling stage is used for reconstruction (b) or editing (c) depending on the target prompt. Both stages have two time schedules where two specific schedules $[1,21,41,...,981]$ and $[10,30,50,...,970]$ are used for example.} 
    \vspace{-15pt}
    \label{fig:main_arch}
\end{figure*}

\subsection{Dual-Schedule Inversion for Editing}\label{sec:Dual-edit}

 After the above design of the Dual-Schedule Inversion for reconstruction, it can be easily combined with common editing methods for editing. For example, a direct editing way is to use a different target prompt in Dual-Schedule Inversion's sampling (see Fig.~\ref{fig:main_arch}(c)). We can also replace DDIM Inversion with Dual-Schedule Inversion in the recent editing methods~\cite{hertz2022prompt,meng2021sdedit,cao2023masactrl}. 

 \noindent \textbf{Five Categories of Editing Tasks.}
We classify text-conditional image editing into five categories: 1) \textit{Object Replacement}: Replacing an object in the image with another object; 2) \textit{Action Editing}: Preserving the background and the object identity but altering his/her/its action; 3) \textit{Scene Editing}: Keeping the objects in the image unchanged while transforming the scene; 4) \textit{New Object Creation}: Adding a new object to the image while keeping the unedited part of the image unchanged; 5) \textit{Style Editing}: Modifying the artistic style of the image. Our Dual-Schedule Inversion is a universal inversion method that can be adapted to all these editing categories by combining it with other image editing techniques, such as SDEdit~\cite{meng2021sdedit}, P2P~\cite{hertz2022prompt}, and MasaCtrl~\cite{cao2023masactrl}.

\noindent \textbf{Automatic Determination of Editing Tasks based on Descriptions.}
\textcolor{black}{
Given the diverse requirements of the five editing tasks, it is imperative to tailor distinct editing methods to different tasks, because different methods are good at different tasks.
For the object replacement and scene editing tasks, we combine our Dual-Schedule Inversion with P2P~\cite{hertz2022prompt}; for the action editing task, we combine it with MasaCtrl~\cite{cao2023masactrl}; for new object adding and style editing, we combine it with SDEdit~\cite{meng2021sdedit}. However, making users identify editing types and choose algorithms is overly complex. To automate and seamlessly cater to users' needs across various editing tasks, we propose an approach capable of automatically determining the editing task.}

\textcolor{black}{
Specifically, with in-context learning~\cite{min2022rethinking}, we first utilize GPT-4 to generate a substantial dataset of triple samples, each comprising a source text, a target text, and an associated editing task label. Subsequently, leveraging the text embeddings extracted by the CLIP text encoder, we design a Transformer-based editing task classifier as shown in Fig.~\ref{fig:main_arch}. This classifier takes as input the concatenated embeddings of the source and target texts. After processing through multiple Transformer blocks, a classification head at the final layer outputs the predicted editing task category.}

\textcolor{black}{
With this task classifier that is trained on the dataset, the editing process becomes significantly more user-friendly. The user is required to input an image. We first employ BLIP~\cite{li2022blip} to generate a caption, providing a source text description. The user then inputs a desired target text. Our task classifier, receiving both the source and target texts as input, predicts the corresponding editing task type (one of the five categories). Based on this classification, our system dynamically selects the appropriate editing method for our Dual-Schedule Inversion, simplifying user interaction and enhancing the editing outcome by automatically adapting the strategy to match the identified task.
}
\vspace{-5pt}

\section{Experiments}
\vspace{-3pt}
We evaluate Dual-Schedule Inversion's performances of reconstruction in Sec.~\ref{rec} and editing in Sec.~\ref{sec:edit} on real images using the pre-trained popular Stable Diffusion model \cite{rombach2022high}. Due to the lack of public benchmarks for this evaluation, we build a testing set in this work.  The set has a total of 150 image-text pairs, among which 32 pairs are from all the examples used by three related works\footnote{timothybrooks.com/instruct-pix2pix}$^,$\footnote{ https://ljzycmd.github.io/projects/MasaCtrl}$^,$\footnote{https://github.com/csyxwei/ELITE}~\cite{cao2023masactrl,wei2023elite,brooks2022instructpix2pix}, and the rest of 118 pairs are from the Internet. It includes images of animals, humans, man-made objects, and scenes, some of which are of low quality or contain complex textures, increasing the possibility of distortion in reconstruction. All images are interpolated, cut, and/or scaled to the size of 512$\times$512. All the image-text pairs are listed in the supplementary material. For the implementation of our Dual-Schedule Inversion, the primary schedule is $[1, 21, 41, ..., 961, 981]$ and the auxiliary schedule is $[10, 30, 50, ..., 990]$.

 \begin{table*}[ht]
\centering
  \renewcommand\arraystretch{0.8}
\small
\setlength{\tabcolsep}{3.0mm}{
\begin{tabular}{c|c|c|cccc|cc}
\toprule
\toprule
\multicolumn{2}{c|}{Method}           & Structure          & \multicolumn{4}{c|}{Background Preservation} & \multicolumn{2}{c}{CLIP Similariy} \\ \midrule
Inversion          & Editing            & Distance$_{^{\times 10^3}}$ $\downarrow$ & PSNR $\uparrow$     & LPIPS$_{^{\times 10^3}}$ $\downarrow$  & MSE$_{^{\times 10^4}}$ $\downarrow$     & SSIM$_{^{\times 10^2}}$ $\uparrow$    & Whole  $\uparrow$          & Edited  $\uparrow$       \\ \midrule
AIDI& P2P                & 39.33           & 22.84  & 89.50  & 52.2  & 71.38 & 23.50        & 20.31          \\
NMG& P2P                & 14.15           & 26.02  & 42.32  & 24.9  & 76.84 & 25.17        & 22.90          \\
EDICT& P2P                &  15.20         &  \underline{26.30} &  \underline{39.49}  & \underline{23.4} & \textbf{78.50} & 24.62     & 21.87     \\
\textcolor{black}{ProxEdit}& P2P                &  \underline{13.70} & 24.35 & 69.58 & 36.9 & 73.40 & 23.80 & 21.70    \\
DirectInv& P2P                &  14.56 & 26.10 & 46.50 & 24.5 & 77.30 & \underline{26.70} & \textbf{23.84}    \\
DDIM& P2P                &  68.70 & 20.72 & 102.97 & 84.6 & 66.40 & 23.80 & 20.07   \\
\midrule
Ours& P2P                &  \textbf{13.09} & \textbf{26.35} & \textbf{39.50} & \textbf{23.3} & \underline{78.12} & \textbf{27.50} & \underline{23.00}    \\
\bottomrule
\end{tabular}}
\vspace{-6pt}
\caption{Comparison with SOTA inversion methods on editing.}
\label{tab_re1}
    \vspace{-0.5cm}
\end{table*}

\begin{table}[t]
    \centering
    \small
    \setlength{\tabcolsep}{1.7mm}{
    \begin{tabular}{l|c|c|c}
    \toprule
        Method & Tuning-free &  PSNR $\uparrow$ & SSIM $\uparrow$ \\ \midrule 
        DDIM Inversion & \CheckmarkBold & 16.348 & 0.509 \\
        Negative-Prompt Inversion & \CheckmarkBold & 21.352 & 0.624 \\
        Null-Text Inversion & \XSolidBrush & 26.106 & 0.738 \\ 
        Dual-Schedule Inversion & \CheckmarkBold & 25.977 & 0.738 \\ \midrule
        Upper Bound &  & 26.310 & 0.742 \\ \bottomrule
    \end{tabular}
    }    \vspace{-5pt}
    \caption{Reconstruction performance. The values are computed on the testing dataset with three guidance scales.}
    \label{tab-metric1}
    \vspace{-9pt}
\end{table}

\vspace{-2pt}
\subsection{Reconstruction Performance}\label{rec}
\vspace{-3pt}
In the reconstruction experiment, we compare Dual-Schedule Inversion with three recent inversion methods, DDIM Inversion~\cite{song2020denoising} (baseline), Null-Text Inversion~\cite{mokady2022null}, and Negative-Prompt Inversion~\cite{miyake2023negative}, under different guidance scales $w$ in Eq.~\ref{cfg}. To ensure an unbiased comparison, all experiments are conducted with fair sampling steps. Specifically, we maintain identical sampling times by using 50 steps for each of our two schedules and 100 steps for other methods. The quantitative comparison is presented in Table~\ref{tab-metric1}, and two qualitative examples are in Fig.~\ref{fig-rec} (more examples are provided in the supplementary material). We calculate the average PSNR and SSIM on this testing set across three different guidance scales 1.0, 4.0, and 7.5. 

In Fig.~\ref{fig-rec}, we clearly see that DDIM Inversion leads to obvious reconstruction failures. Most of its reconstructed images are very different from the source images, and its PSNR and SSIM values are far below those of the other methods (Table~\ref{tab-metric1}). Thus, editing real images using this baseline can lead to severe distortions, including object/scene identity distortions (see Sec.~\ref{sec:edit}). Our Dual-Schedule Inversion significantly outperforms Negative-Prompt Inversion quantitatively. Compared with Null-Text Inversion, our method has the same SSIM, with a slightly lower PSNR (25.977 vs. 26.106). However, Null-Text Inversion requires an extra fine-tuning stage.

In Table~\ref{tab-metric1}, we also show the upper bound for the reconstruction quality. Stable Diffusion model uses an encoder to encode an image to a latent space, in which sampling is performed, and uses a decoder to decode the final latent into the image space. So the upper bound is obtained without the sampling stage. We can see that the results of Dual-Schedule Inversion and Null-Text Inversion are close to the upper bound. Though our method is mathematically reversible, the small gap between it and the upper bound comes from the numerical errors during sampling.

To demonstrate the robustness and efficacy of our inversion method during editing, compared to other SOTA inversion techniques, we adopt the metrics from \cite{ju2023direct} to evaluate reconstruction ability during editing on 20 real images. Specifically, the structure distance and background preservation metrics in Table \ref{tab_re1} represent the reconstruction performance of the unedited parts. \textcolor{black}{Structure distance computes the distance between structure features extracted from DINO-ViT. For background preservation, \cite{ju2023direct} computes PSNR, LPIPS, MSE, and SSIM in the unedited area with manual-annotated masks. Following \cite{ju2023direct}, we also obtain these masks on the source image in the same way.} We uniformly equip the six inversion methods and ours with the same editing technique (P2P) to test their effectiveness. As shown in Table \ref{tab_re1}, our inversion method overall achieves the best performance. To explore the performance improvements of our inversion across different editing techniques, we conduct experiments on three editing methods. As shown in Table \ref{tab_re2}, the results demonstrate that our inversion method enhances all these editing techniques.
 \begin{table*}[ht]
\centering
\small
  \renewcommand\arraystretch{0.8}
\setlength{\tabcolsep}{3.0mm}{
\begin{tabular}{c|c|c|cccc|cc}
\toprule
\toprule
\multicolumn{2}{c|}{Method}           & Structure          & \multicolumn{4}{c|}{Background Preservation} & \multicolumn{2}{c}{CLIP Similariy} \\ \midrule
Inversion          & Editing            & Distance$_{^{\times 10^3}}$ $\downarrow$ & PSNR $\uparrow$     & LPIPS$_{^{\times 10^3}}$ $\downarrow$  & MSE$_{^{\times 10^4}}$ $\downarrow$     & SSIM$_{^{\times 10^2}}$ $\uparrow$    & Whole  $\uparrow$          & Edited  $\uparrow$       \\ \midrule
DDIM& P2P                &  68.70 & 20.72 & 102.97 & 84.6 & 66.40 & 23.80 & 20.07   \\
Ours& P2P                &  \textbf{13.09} & \textbf{26.35} & \textbf{39.50} & \textbf{23.3} & \textbf{78.12} & \textbf{27.50} & \textbf{23.00}    \\
\midrule
DDIM& MasaCtrl                &  70.80 & 21.26 & 97.54 & 74.1 & 69.27 & 21.92 & 20.38    \\
Ours& MasaCtrl                & \textbf{13.95}           & \textbf{25.96}  & \textbf{43.50}  & \textbf{25.4}  & \textbf{77.50} & \textbf{23.20}  & \textbf{21.50}          \\
\midrule
DDIM& SDEdit & 75.70 & 18.85  & 117.53 & 130.3 & 64.83 & 22.18 & 20.95 \\
Ours& SDEdit & \textbf{14.60} & \textbf{25.65}  & \textbf{47.51} & \textbf{27.2} & \textbf{76.50} & \textbf{24.05} & \textbf{21.78} \\
\bottomrule
\end{tabular}}
\vspace{-8pt}
\caption{Performance improvements of three editing methods using our inversion technique.}
\label{tab_re2}
    \vspace{-0.6cm}
\end{table*}
\begin{figure}[!t]
    \centering
\includegraphics[width=1.03\linewidth]{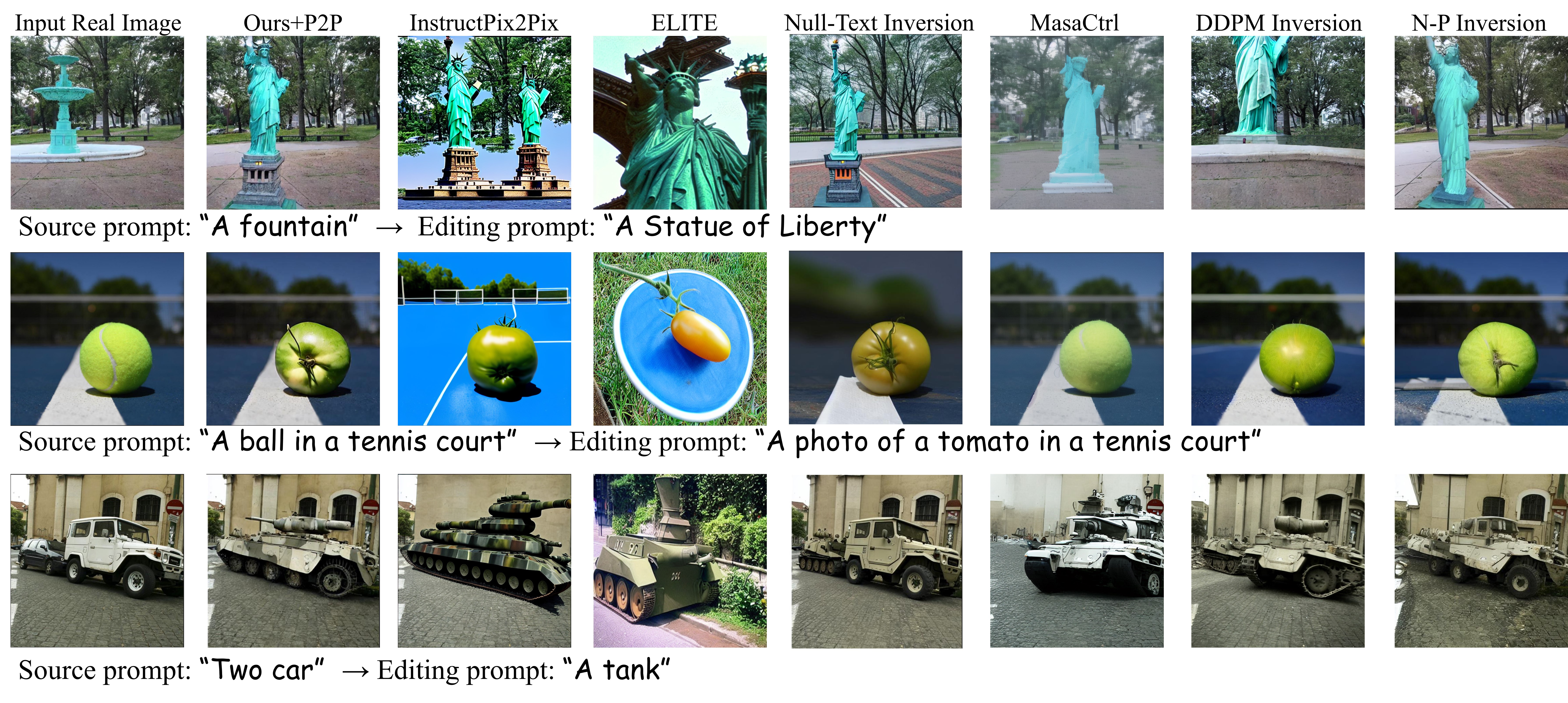}
    \vspace{-22pt}
    \caption{Comparison with SOTA editing methods on real images for \textit{Object Replacement}. N-P Inversion denotes Negative-Prompt Inversion.}
    \vspace{-12pt}
    \label{fig:results_rep}
\end{figure}
\begin{figure}[t]
    \centering
    \includegraphics[width=1.03\linewidth]{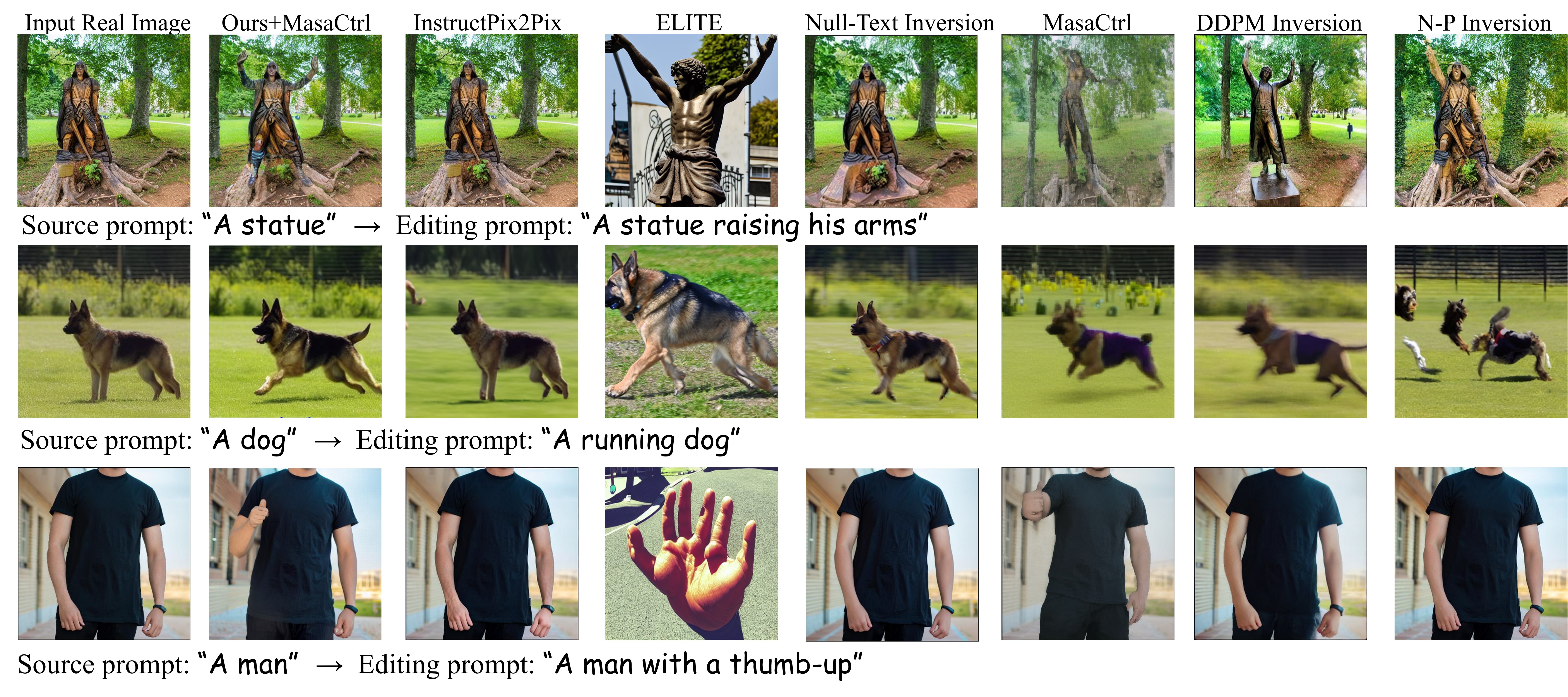}
    \vspace{-20pt}
    \caption{Comparison with SOTA editing methods on real images for \textit{Action Editing}.}
    \label{fig:results_act}
    \vspace{-4pt}
\end{figure}
\begin{figure}[t]
    \centering
    \includegraphics[width=1.03\linewidth]{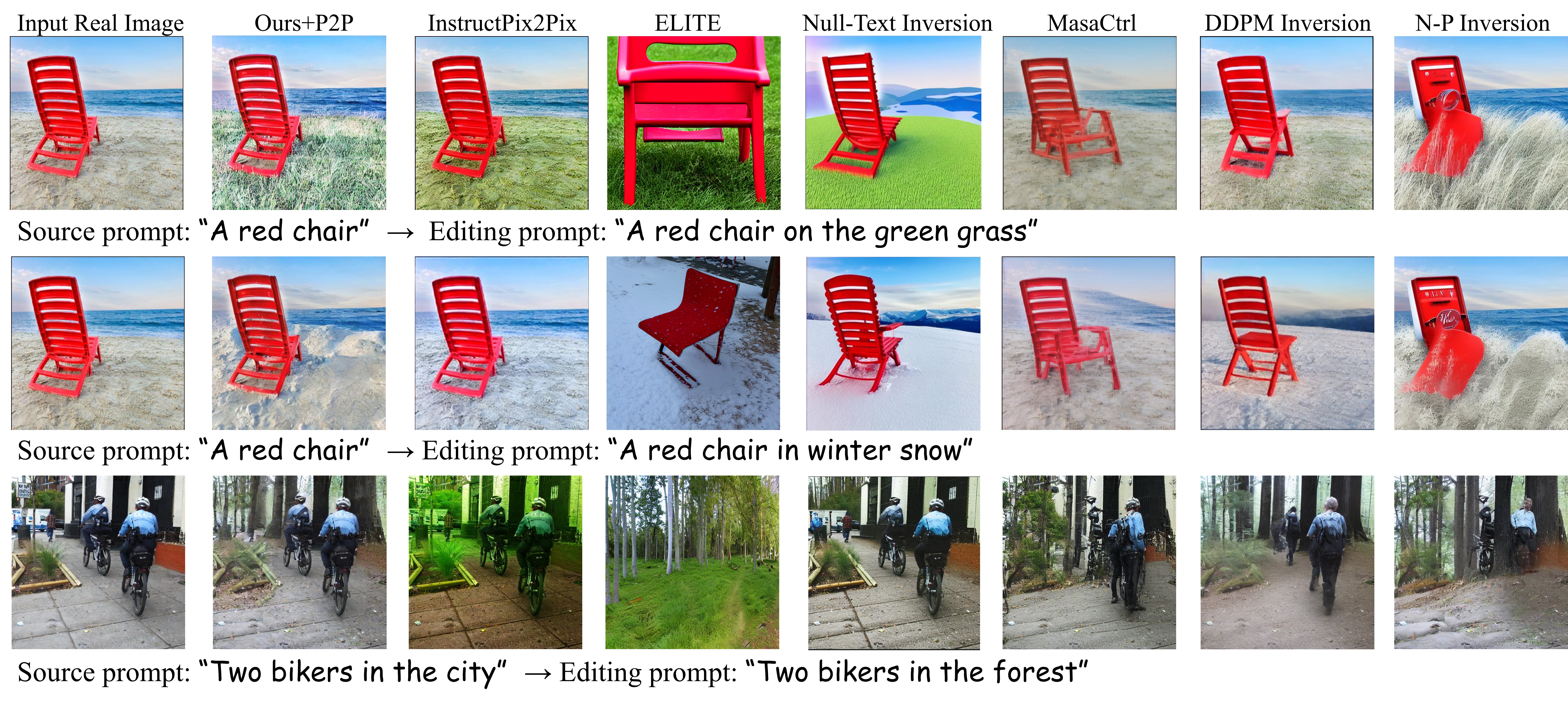}
    \vspace{-20pt}
    \caption{Comparison with SOTA editing methods on real images for \textit{Scene Editing}.}
    \label{fig:results_sce}
    \vspace{-15pt}
\end{figure}

 \begin{figure}[t]
    \centering
    \includegraphics[width=1.03\linewidth]{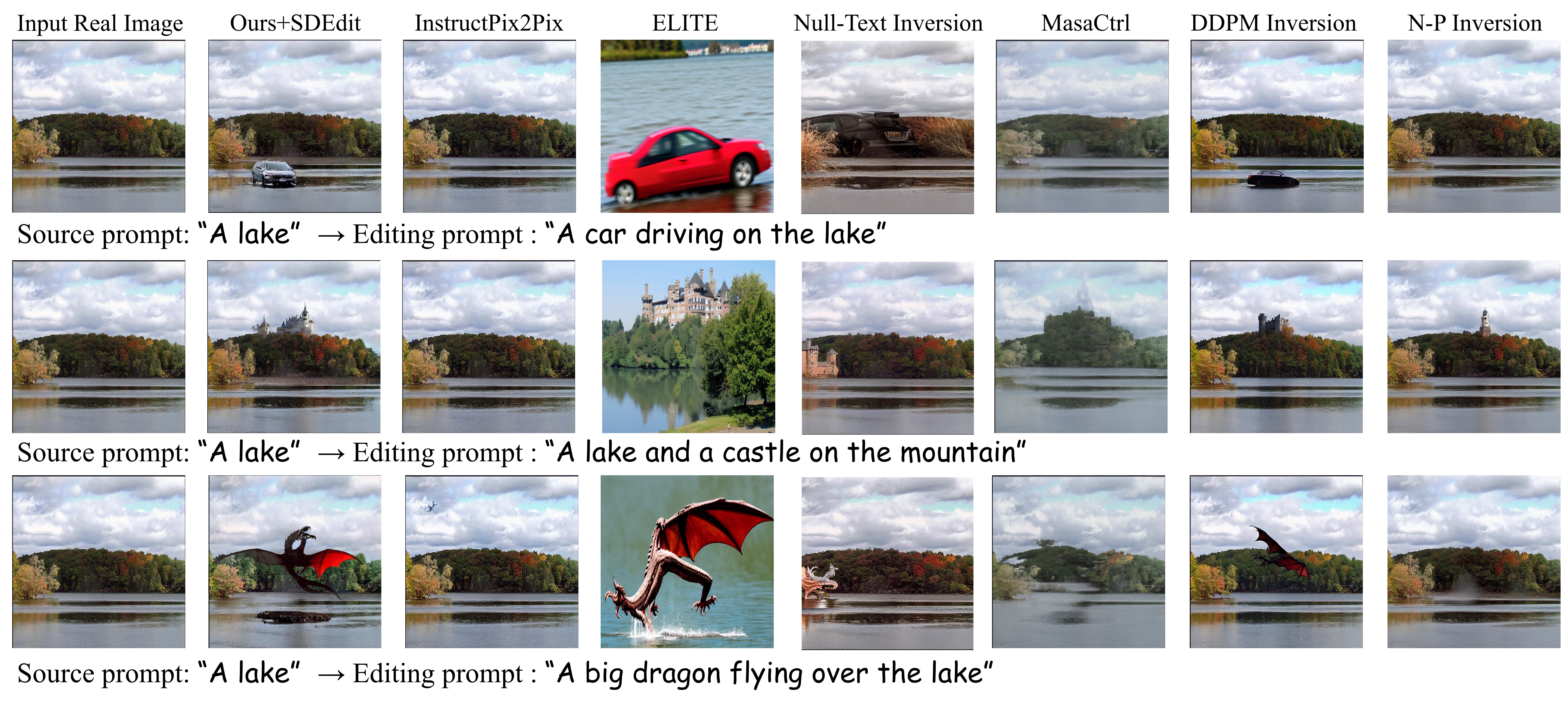}
    \vspace{-20pt}
    \caption{Comparison with SOTA editing methods on real images for \textit{Object Adding}.}
    \label{fig:results_add}
    \vspace{-4pt}
\end{figure}
\begin{figure}[t]
    \centering
\includegraphics[width=1.03\linewidth]{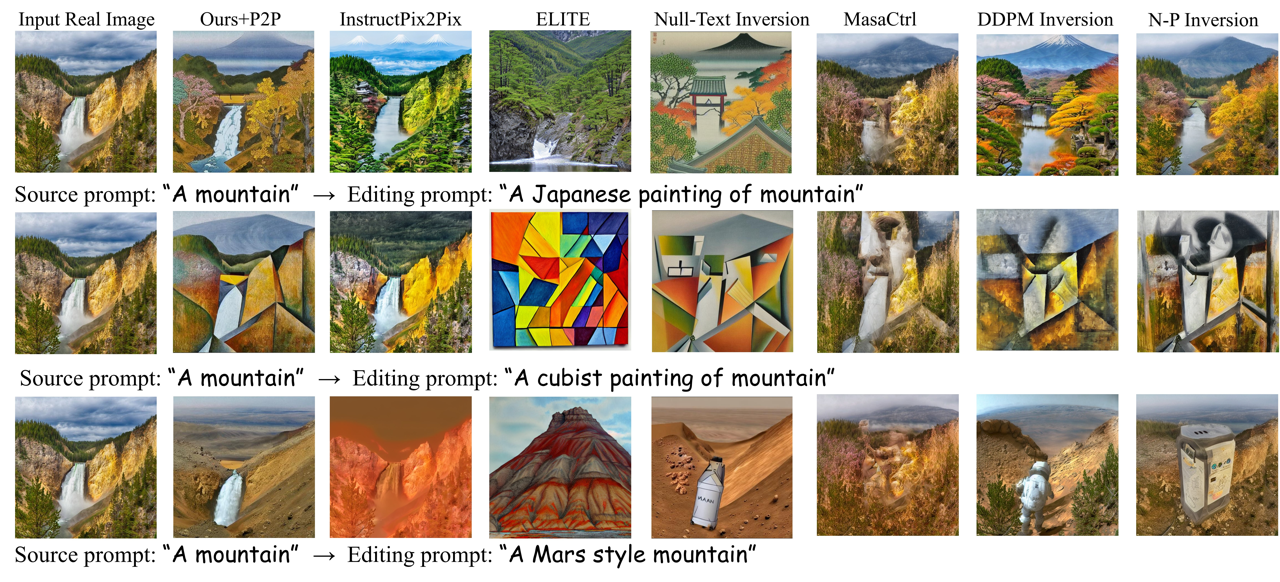}
    \vspace{-20pt}
    \captionof{figure}{
    Comparison with SOTA editing methods on real images for \textit{Style Editing}.
    }
    \vspace{-5pt}
    \label{fig:style}
\end{figure}
\begin{table}[t!]
    \centering
    \small
    \setlength{\tabcolsep}{0.55mm}{
    \resizebox{\linewidth}{!}{
    \begin{tabular}{l|ccccc}
    \toprule
      Method  & Null-Text Inversion & DreamBooth& ELITE  & N-P Inversion\\
      \hline
      User Score $\uparrow$ & 5.15&2.89& 1.95&1.77 \\ 
      CLIP Score $\uparrow$ & 0.2819& 0.2599 & 0.2509  &0.3031 & \\
      LPIPS $\downarrow$ & 0.035& 0.1663 & 0.1545  &0.0866 & \\
      \midrule
      Method   & InstructPix2Pix & MasaCtrl & DDPM Inversion & Ours\\ \hline
      User Score $\uparrow$& 4.99& 4.15&5.48 &\textbf{5.85} \\ 
      CLIP Score $\uparrow$  & 0.2401 & 0.2529 &0.2614 &\textbf{0.3041} \\
      LPIPS $\downarrow$& 0.097 & 0.0778&0.0715 &\textbf{0.018} \\
      \bottomrule
    \end{tabular}}}
        \vspace{-7pt}
    \caption{Quantitative results of different editing methods. The user scores are in the range of $[0,10]$. N-P Inversion denotes Negative-Prompt Inversion.}
    \vspace{-14pt}
    \label{tab-time}
\end{table}

\subsection{Editing Performance}\label{sec:edit}
\vspace{-3pt}
 
We compare with seven SOTA editing methods: InstructPix2Pix~\cite{brooks2022instructpix2pix}, MasaCtrl~\cite{cao2023masactrl}, ELITE \cite{wei2023elite}, DDPM Inversion~\cite{huberman2023edit}, DreamBooth~\cite{ruiz2022dreambooth}, Negative-Prompt Inversion~\cite{miyake2023negative}, and Null-Text Inversion~\cite{mokady2022null}. The visual comparisons for the five editing categories are shown in Figs.~\ref{fig:results_rep}--\ref{fig:style}, respectively. Most of the compared methods fail to solve the following two problems simultaneously: 1) the unedited part of an image cannot retain its original identity; 2) the edited results do not match the editing prompts. Specifically, ELITE, MasaCtrl, DDPM Inversion, and Negative-Prompt Inversion often result in identity distortions. InstructPix2Pix and Null-Text Inversion tend to retain the structure of the object, so they are not able to perform action editing. MasaCtrl~\cite{cao2023masactrl} is designed for action editing and it is not able to perform object or scene replacement. Our method is able to solve these two problems simultaneously in various types of editing because of its perfect reconstruction capability.

\begin{table*}[t]
\centering
\small
\vspace{3pt}\renewcommand\arraystretch{0.8}
\setlength{\tabcolsep}{2.8mm}{
\begin{tabular}{c|c|c|cccc|cc}
\toprule
\toprule
\multicolumn{2}{c|}{Method}           & Structure          & \multicolumn{4}{c|}{Background Preservation} & \multicolumn{2}{c}{CLIP Similariy} \\ \midrule
Inversion          & Editing            & Distance$_{^{\times 10^3}}$ $\downarrow$ & PSNR $\uparrow$     & LPIPS$_{^{\times 10^3}}$ $\downarrow$  & MSE$_{^{\times 10^4}}$ $\downarrow$     & SSIM$_{^{\times 10^2}}$ $\uparrow$    & Whole  $\uparrow$          & Edited  $\uparrow$       \\ \midrule
Ours& P2P                &  13.09 & 26.35 & 39.50 & 23.3 & 78.12 & 27.50 & 23.00    \\
Ours& MasaCtrl                & 13.95           & 25.96  & 43.50  & 25.4  & 77.50 & 23.20        & 21.50          \\
Ours& SDEdit & 14.60 & 25.65  & 47.51 & 27.2 & 76.50 & 24.05 & 21.78 \\
\midrule
Ours             & Classifier                & \textbf{12.98}          & \textbf{26.42}  & \textbf{38.17}  & \textbf{22.80}  & \textbf{78.93} & \textbf{30.41}         & \textbf{25.43}          \\
\bottomrule
\end{tabular}}
\vspace{-6pt}
\caption{Ablation study for our classifier.}
\label{tab_re3}
    \vspace{-0.3cm}
\end{table*}

 \begin{figure}[t]
    \centering
    \includegraphics[width=1.03\linewidth]{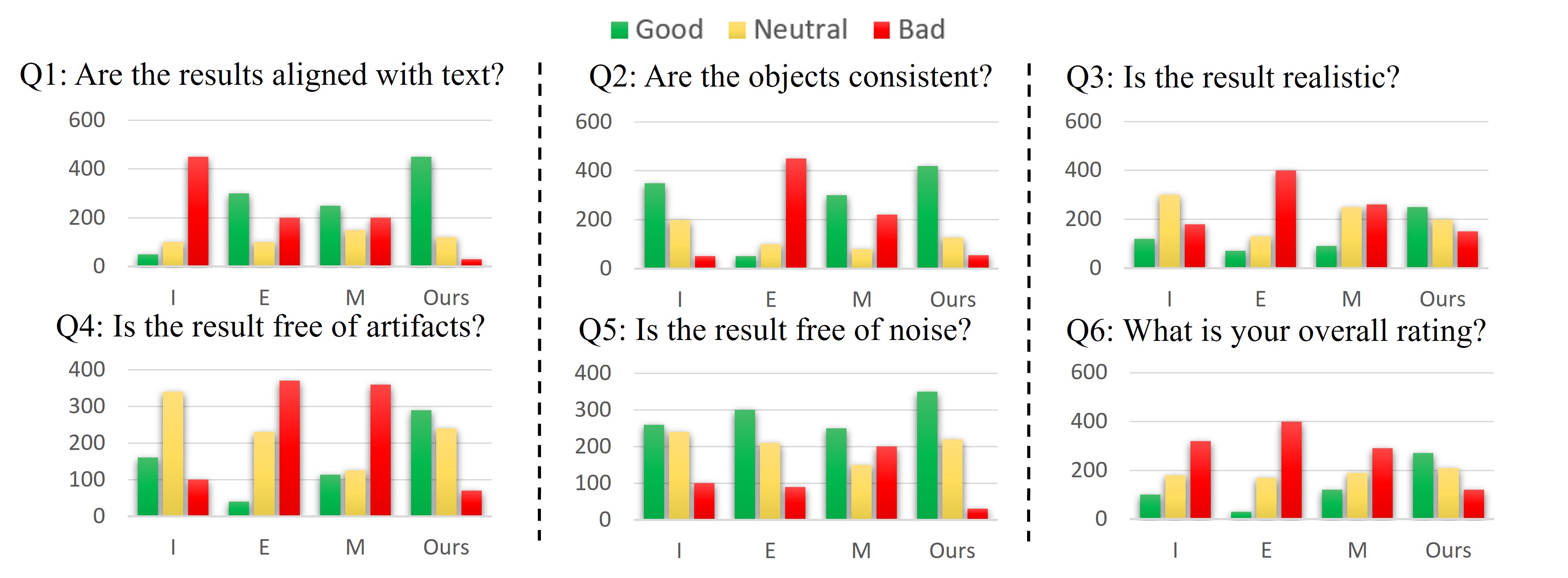}
    \vspace{-20pt}
    \caption{Comparison with three most recent SOTA methods for 6 specific questions with 3 possible answers (good, neutral, bad), where I, E, and M denote InstructPix2Pix, ELITE, and MasaCtrl, respectively.}
    \label{fig:usertab}
    \vspace{-15pt}
\end{figure}

For quantitative evaluation, since there is no image editing benchmark, similar to~\cite{wei2023elite}, we conduct a user study with 30 participants to evaluate 180 edited results by the 8 methods on 20 real images. The participants are asked to score the overall editing quality of these results from 1 (worst) to 10 (best), based on the following two criterions: 1) consistency of the results with the original image, and 2) consistency of the results with the editing prompt. For each image, 8 edited results by the methods are shown to each participant. The average user scores, CLIP Score~\cite{hessel2021clipscore}, and LPIPS~\cite{zhang2018unreasonable} are reported in Table~\ref{tab-time}, where our method obtains the best scores (see DiffEdit~\cite{couairon2022diffedit} about how to calculate CLIP Score and LPIPS). In order to analyze the detailed aspects influencing these scores, we specifically design 6 Q\&As and compare with 3 most recent methods, InstructPix2Pix~\cite{brooks2022instructpix2pix}, MasaCtrl~\cite{cao2023masactrl}, and ELITE~\cite{wei2023elite}. Fig.~\ref{fig:usertab} reports the rating distributions of the 4 methods, among which ours receives more “good” and fewer “bad” answers.

\subsection{Ablation Study}\label{sec:ablation_study}

\noindent \textbf{Auxiliary Schedules.}\quad In Fig.~\ref{fig:main_arch} and the reconstruction experiments, the primary schedule is $[1, 21, 41, ..., 961, 981]$ and the auxiliary schedule is $[10, 30, 50, ..., 950, 970]$, where $\tau=t-s+\frac{s}{2}$ (see Eq.~\ref{what sample sim} and Eq.~\ref{what invert sim}). In fact, $\tau$ can be set to other values such as $\tau=t-s+\frac{1}{4}s$ and $\tau=t-s+\frac{3}{4}s$. From the second column in Table~\ref{tab:abs}, we can see that the reconstruction performances are very similar in these three settings of $\tau$. However, we find that when $\tau=t-s+\frac{1}{4}s$ or $\tau=t-s+\frac{3}{4}s$, the editing performances are not as good as when $\tau=t-s+\frac{s}{2}$ (see the supplementary material). We suspect that when $\tau=t-s+\frac{s}{2}$ (i.e, the midpoint between $t-s$ and $t$), the auxiliary latent $\tilde{z}^a_\tau$ can better balance the sampled primary latents $\tilde{z}^p_{t-s}$ and $\tilde{z}^a_t$.
\begin{table}[t]
\renewcommand{\arraystretch}{1.0}
\centering
\footnotesize
\setlength{\tabcolsep}{0.4mm}{
\begin{tabular}{c|c c c| c c}
\toprule
$T^{\prime}$ & 50 & 50 & 50 &  50 & 20\\
\hline
$\tau$ & $t-s+\frac{1}{4}s$\quad\quad & $t-s+\frac{3}{4}s$\quad\quad & $t-s+\frac{s}{2}$ &$t-s+\frac{s}{2}$ \quad& $t-s+\frac{s}{2}$\\
\midrule
PSNR  $\uparrow$ &25.961&25.963 & 25.977 & 25.977 & 25.945\\
SSIM $\uparrow$ & 0.737 &0.737 & 0.738 & 0.738 & 0.732\\
\bottomrule
\end{tabular}}
\vspace{-6pt}
\caption{Ablation Study for $\tau$ and $T^{\prime}$.}
\label{tab:abs}
\vspace{-6pt}
\end{table}

\noindent \textbf{Schedule Lengths.}\quad In Fig.~\ref{fig:main_arch} and all the editing examples, the numbers of the sampling steps are all 50 ($T^\prime=50$, defined in Sec.~\ref{sec:fail}). Our Dual-Schedule Inversion can be applied to fewer or more steps. In the third column of Table~\ref{tab:abs}, we compare its reconstruction performances when $T^\prime=50$ and $T^\prime=20$, and see that the performance drops only slightly when $T^\prime=20$, showing that it is robust to different sampling steps.

\noindent \textbf{Task Classifier.}\quad \textcolor{black}{To further demonstrate the efficacy of the task classifier within our pipeline, we integrate our Dual-Schedule Inversion framework with SDEdit, P2P, and MasaCtrl, both with and without the implementation of the task classifier. This evaluation is carried out on our testing set comprising 150 images, distributed across the five editing tasks. The results are detailed in Table~\ref{tab_re3}, 
which reveals a notable drop in editing performance when the task classifier is not utilized due to the fact that each algorithm is only good at specific tasks. By adaptively selecting the most appropriate algorithm for each specific task, our inversion approach facilitates a more user-friendly application.}

\section{Conclusion and Limitation}

In this paper, we propose a tuning-free Dual-Schedule Inversion that enables faithful reconstruction and editing on real images. With the novel design of the primary and auxiliary schedules for inversion and sampling, our method guarantees reversibility mathematically. Combining it with other editing techniques, it achieves SOTA editing performance on object replacement, action editing, scene editing, new object creation, and style editing. 

Since it needs to be integrated with other editing techniques for image editing, its editing ability is restricted by these techniques. When a more powerful editing method appears, its editing competence will also be boosted.

{\small
\bibliographystyle{ieee_fullname}
\bibliography{main}
}

\clearpage
\setcounter{page}{1}
\pagenumbering{arabic}
\appendix
\twocolumn[{%
\renewcommand\twocolumn[1][]{#1}%
\maketitle
\vspace{-0.9cm}
\begin{center}
    \renewcommand*{\thefigure}{S\arabic{figure}}
    \centering
    \includegraphics[width=1\linewidth]{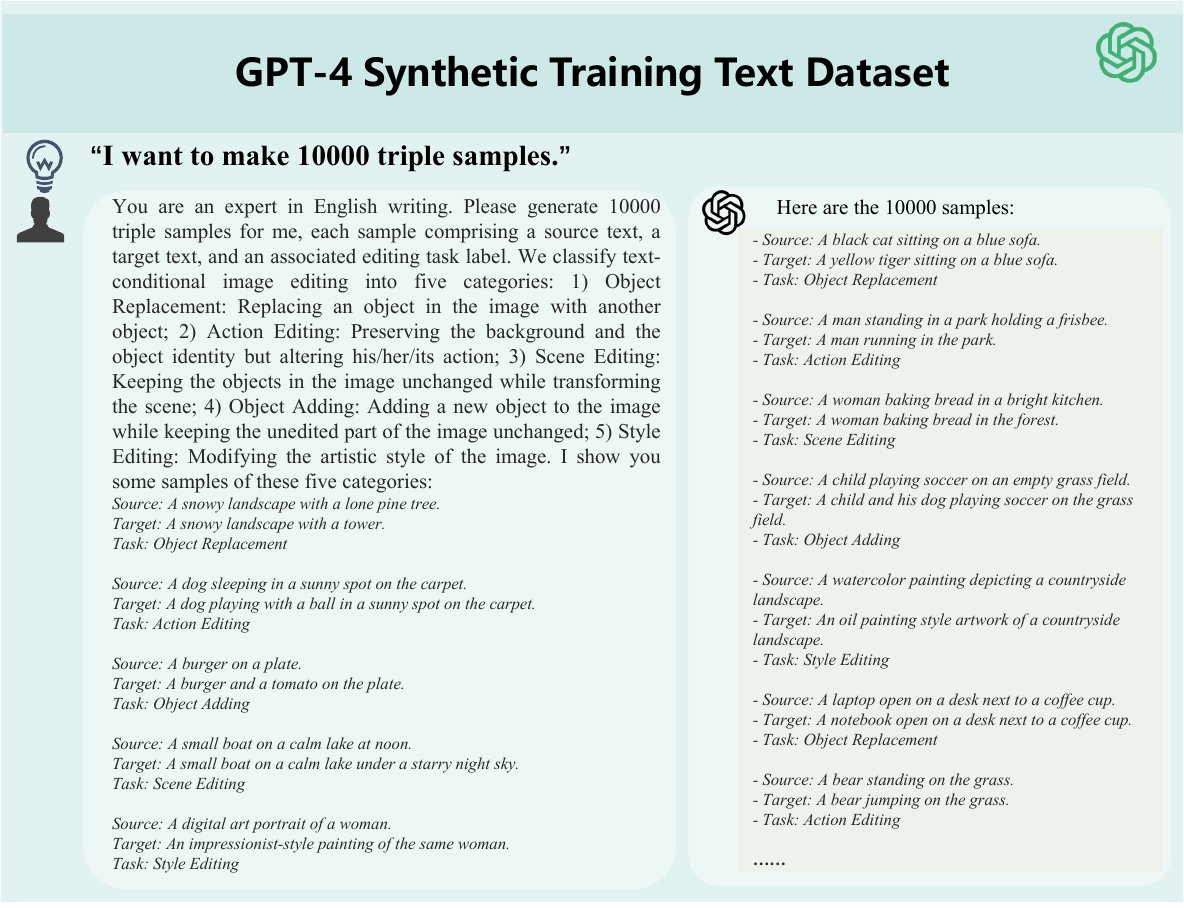}
    \vspace{-19pt}
    \captionof{figure}{
    Using GPT4 to generate our triple dataset.
    }
    \vspace{-0.1cm}
    \label{fig:gpt}
\end{center}%
}]

\begin{figure*}[t]
    \centering
    \renewcommand*{\thefigure}{S\arabic{figure}}
    \includegraphics[width=\linewidth]{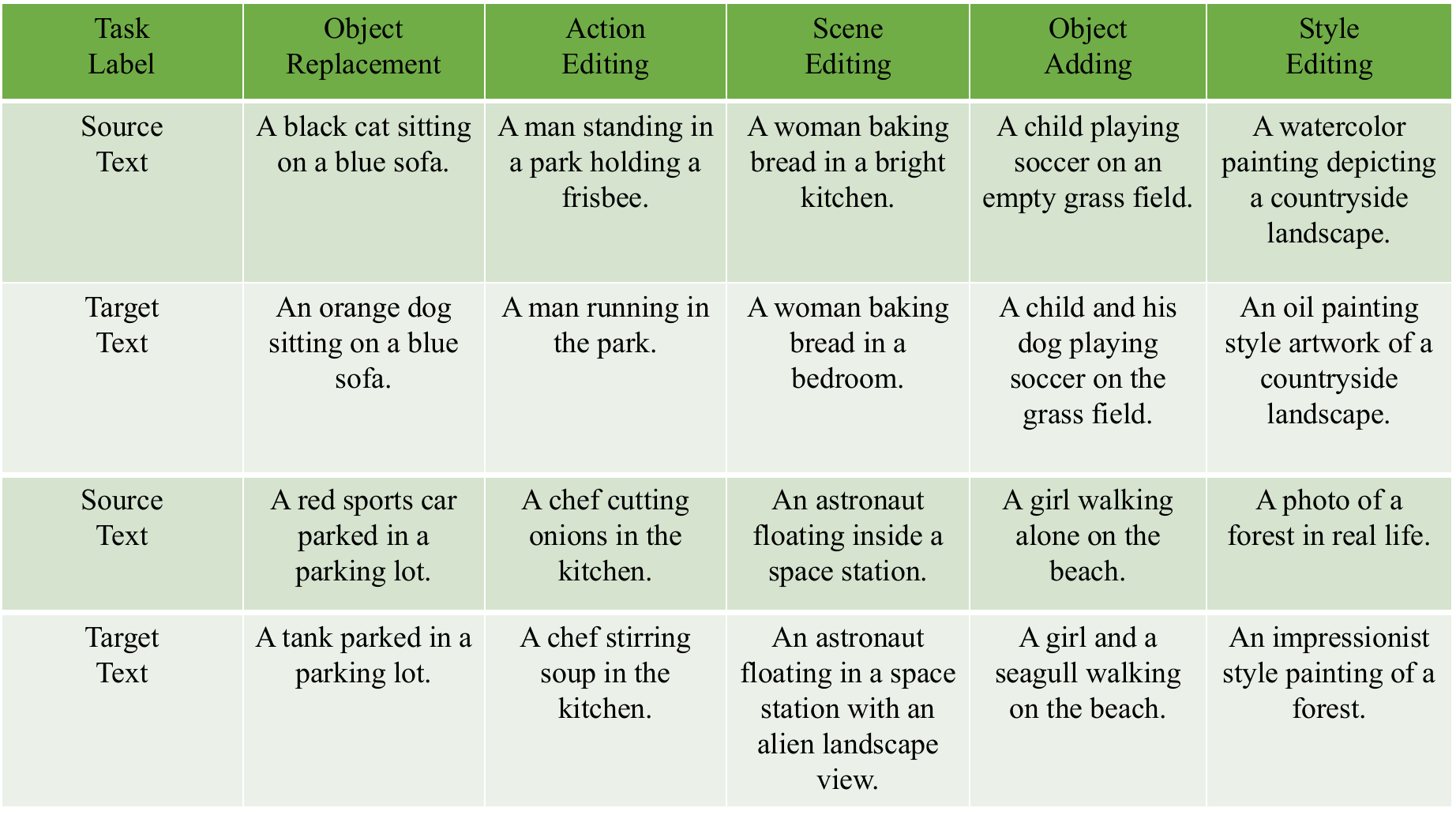}
    \vspace{-19pt}
    \caption{Examples of our triple dataset.}
    \label{fig:text}
    \vspace{1pt}
\end{figure*}
In this supplementary material, we (1) show how to build the training dataset for the editing task classifier and the structure of the classifier, (2) give more ablation study, (3) show all the test images with prompts, (4) prove $\bar z_1^{p}$ and $\bar z_{10}^{a}$ obtained by DDIM does not affect our reversibility, \textcolor{black}{(5) provide more reconstruction examples across different CFG scales, and (6) summarize recent inversion and editing methods in Table~\ref{tab-type}.}

\section{Editing Tasks Classification}
\label{sec:dataset}
In the main paper, we classify text-conditional image editing into five categories: 1) \textit{Object Replacement}: Replacing an object in the image with another object; 2) \textit{Action Editing}: Preserving the background and the object identity but altering his/her/its action; 3) \textit{Scene Editing}: Keeping the objects in the image unchanged while transforming the scene; 4) \textit{Object Adding}: Adding a new object to the image while keeping the unedited part of the image unchanged; 5) \textit{Style Editing}: Modifying the artistic style of the image. We first utilize GPT-4 to generate a substantial training dataset of triple samples as shown in Fig.~\ref{fig:gpt}, each comprising a source text, a target text, and an associated editing task label. We give some examples of our dataset in Fig.~\ref{fig:text}.

\begin{table}[t]
	\small
	\renewcommand\arraystretch{1.1}
	\renewcommand*{\thetable}{S\arabic{table}}
        \vspace{-1pt}
	\begin{center}
		
		\setlength{\tabcolsep}{0.4mm}{
			\begin{tabular}{c|cc}
				\toprule
				Network&Setting&Value\\
				\hline				
				\multirow{7}*{Classifier}&Input Channel&768\\
				~&Token Number & 154 (77$\times$ 2)\\
				~&Transformer block channels&512\\
				~&Attention head number&$8$\\
				~&Residual Connection&True\\
				~&Block numbers&6\\
                ~&Final linear layer input channels&512\\
                ~&Final linear layer output channels&5\\
				\bottomrule 
			\end{tabular}
		}
		
            \vspace{-4pt}
            \caption{Model configurations and parameter choices.
		}\label{tab:class}
		\vspace{-19pt}
	\end{center}
\end{table}

\begin{table}[ht]
    \centering
    \small
    \renewcommand*{\thetable}{S\arabic{table}}
    \setlength{\tabcolsep}{1.2mm}{
    \begin{tabular}{l|l|l}
    \toprule
    Type & Learning Strategy & Method \\ \midrule
    \multirow{10}{*}{Inversion} & \multirow{3}{*}{Testing-Time Finetuning} & Null-Text Inversion~\cite{mokady2022null} \\ 
     ~&~& Textual Inversion~\cite{gal2022image} \\
     ~&~& AIDI\cite{pan2023effective} \\
     \cmidrule{2-3}
     ~&\multirow{7}{*}{Training \& Tuning Free} & DDIM Inversion~\cite{song2020denoising} \\
     ~&~& DDPM Inversion~\cite{huberman2023edit} \\
     ~&~& N-P Inversion~\cite{miyake2023negative} \\ 
     ~&~& ProxEdit~\cite{han2023improving} \\ 
     ~&~& EDICT\cite{wallace2023edict} \\ 
     ~&~& NMG\cite{cho24noise} \\
     ~&~& DirectInv\cite{ju2023direct} \\ 
     ~&~& \textbf{Dual-Schedule Inversion} \\ \midrule
    \multirow{8}{*}{Editing} & \multirow{3}{*}{Training-Based} & ELITE~\cite{wei2023elite} \\ 
     ~&~& FastComposer~\cite{xiao2023fastcomposer} \\
     ~&~& InstructPix2Pix~\cite{brooks2022instructpix2pix} \\ 
     \cmidrule{2-3}
     ~&\multirow{2}{*}{Testing-Time Finetuning} & DreamBooth~\cite{ruiz2022dreambooth} \\ 
     ~&~& Custom Diffusion~\cite{kumari2022customdiffusion} \\ 
     \cmidrule{2-3}
     ~&\multirow{3}{*}{Training \& Tuning Free} & SDEdit~\cite{meng2021sdedit} \\ 
     ~&~& P2P~\cite{hertz2022prompt} \\ 
     ~&~& MasaCtrl~\cite{cao2023masactrl} \\ \bottomrule
    \end{tabular}}
    \vspace{-5pt}
    \caption{Some recent inversion and editing methods. N-P Inversion denotes Negative-Prompt Inversion~\cite{miyake2023negative}.}
    \label{tab-type}
    \vspace{-10pt}
\end{table}
Subsequently, leveraging the text embeddings extracted by the CLIP text encoder, we design a Transformer-based editing task classifier as shown in Table~\ref{tab:class}. This classifier takes as input the concatenated embeddings of the source and target texts. After processing through multiple Transformer blocks, a classification head at the final layer outputs the predicted editing task category.
\begin{figure*}[t]

    \renewcommand*{\thefigure}{S\arabic{figure}}
    \centering
    \includegraphics[width=1\linewidth]{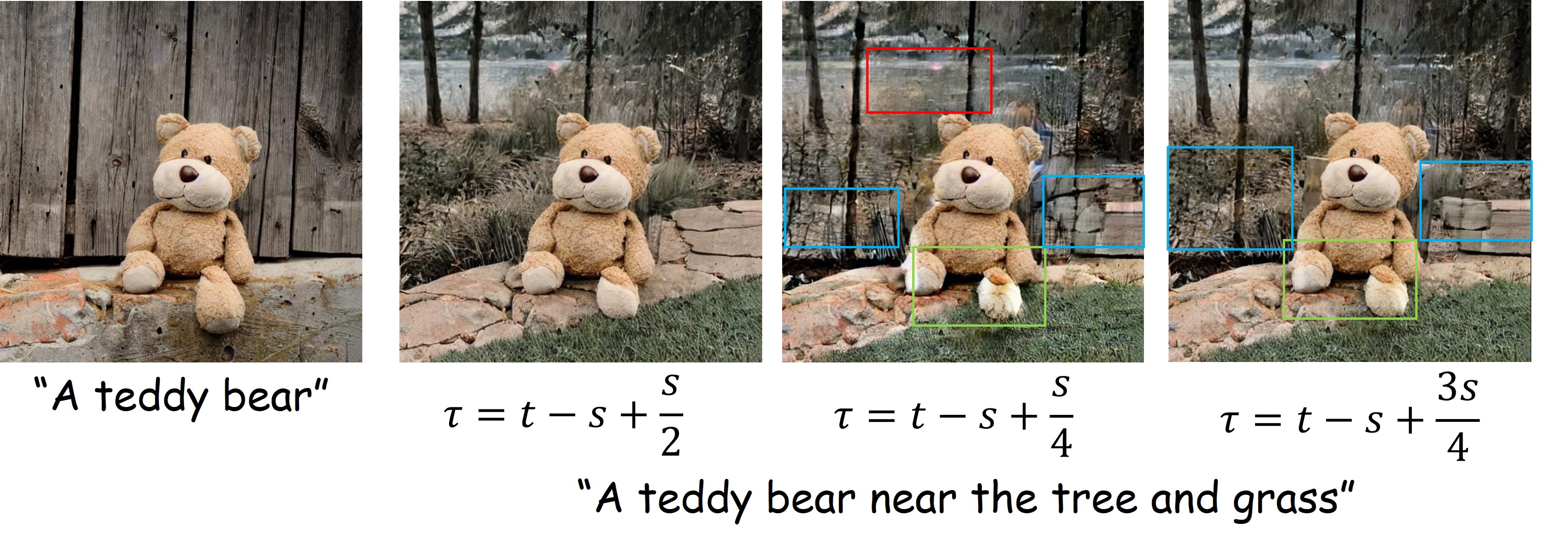}
    \vspace{-25pt}
    \caption{
    Editing results with $\tau=t-s+\frac{s}{2}$, $\tau=t-s+\frac{1}{4}s$, or $\tau=t-s+\frac{3}{4}s$. Slight blurring (red boxes), artifacts (green boxes), and strange textures (blue boxes) appear in the last two edited images.
    }
    \label{fig:abla}
\end{figure*}

\section{Results of Ablation Study on Auxiliary Schedules}
\label{sec:abla}
As mentioned in Sec.~5.3, the reconstruction performances are very similar in three settings of $\tau$. However, when $\tau=t-s+\frac{1}{4}s$ or $\tau=t-s+\frac{3}{4}s$, the editing performances are not as good as when $\tau=t-s+\frac{s}{2}$. For instance, we give a scene editing example in Fig.~\ref{fig:abla}. It can be seen that slight blurring, artifacts, and strange textures may be present in certain regions of the edited image when $\tau=t-s+\frac{1}{4}s$ or $\tau=t-s+\frac{3}{4}s$.

\section{Testing Dataset}
\label{sec:data}
Due to the lack of public benchmarks for the evaluation, we build a testing set in this work.  The set has a total of 150 image-text pairs, among which 32 pairs are from all the examples used by three related works\footnote{ timothybrooks.com/instruct-pix2pix}$^,$\footnote{ https://ljzycmd.github.io/projects/MasaCtrl}$^,$\footnote{https://github.com/csyxwei/ELITE}~\cite{cao2023masactrl,wei2023elite,brooks2022instructpix2pix}, and the rest of 118 pairs are from the Internet. All images are interpolated, cut, and/or scaled to the size of 512$\times$512. We use all the images for the reconstruction experiment and the first 20 images for the editing experiment. The complete set of the image-text pairs is presented in Figs.~\ref{fig:data1}, ~\ref{fig:data2}, ~\ref{fig:data3}, and \ref{fig:data4}.

\section{Initial Latent Setup and Reversibility Proof}
\label{sec:re}

In Section 4.2 of the main paper, we state that $\bar{z}_1^{p}$ and $\bar{z}_{10}^{a}$ obtained by the original forward process of DDIM do not affect our method's reversibility. To prove it, we need to ensure that these initial values conform to the conditions required for reversibility.

Mathematically, given $\bar{z}_1^{p}$ and $\bar{z}_{10}^{a}$ obtained by DDIM, we need to prove:
\begin{equation}
    \tilde{z}_{1}^{p} = \bar{z}_{1}^{p},
\end{equation}
which represents the reversibility of our inversion and sampling since $\bar{z}_{1}^{p}$ and $\tilde{z}_{1}^{p}$ denote the beginning of the inversion stage and the endpoint of the sampling stage, respectively.

\begin{proposition}
Let $\bar{z}_1^{p}$ and $\bar{z}_{10}^{a}$ be obtained by the original forward process of DDIM. Then $\tilde{z}_{1}^{p}=\bar{z}_{1}^{p}$ where $\tilde{z}_{1}^{p}$ is obtained by the Dual-Schedule Inversion method presented in Section 4.3 of the main paper.
\end{proposition}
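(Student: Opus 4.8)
The plan is to verify directly that the initial latents produced by the ordinary DDIM forward step satisfy the two reversibility conditions established in Section~4.3, namely $\tilde z_\tau = \bar z_\tau$ (agreement on the auxiliary latents) and $\tilde z_t = \bar z_t$ (agreement on the primary latents), at the base of the induction; everything else then follows from the argument already given in the paragraph titled \textbf{Reversibility}. Concretely, the inductive chain in the main paper derives $\tilde z^p_1 = \bar z^p_1$ from the top two conditions $\tilde z^p_{981} = \bar z^p_{981}$ and $\tilde z^a_{970} = \bar z^a_{970}$ together with the two recursions Eq.~\ref{invert odd}--\ref{invert even} and Eq.~\ref{sampling odd}--\ref{sampling even}. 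So what remains is only to handle the \emph{other} end of the schedules: I must show that the way $\bar z^p_1$ and $\bar z^a_{10}$ enter the recursions is consistent, i.e.\ that feeding a DDIM-produced $\bar z^p_1$ into Eq.~\ref{invert odd} (to get $\bar z^p_{21}$) and a DDIM-produced $\bar z^a_{10}$ into Eq.~\ref{invert even} (to get $\bar z^a_{30}$), and then running the sampling recursions Eq.~\ref{sampling odd}--\ref{sampling even} all the way down, lands exactly back on the same $\bar z^p_1$.

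First I would set up the induction precisely. Let the primary schedule be $[1,21,\dots,981]$ and the auxiliary schedule $[10,30,\dots,970]$. The inversion stage produces, in order, $\bar z^p_1$ and $\bar z^a_{10}$ (by DDIM), then alternately $\bar z^a_{30}, \bar z^p_{21}, \bar z^a_{50}, \bar z^p_{41}, \dots$ up to $\bar z^p_{981}$, using Eq.~\ref{invert odd} and Eq.~\ref{invert even}. The sampling stage starts from $\tilde z^p_{981} := \bar z^p_{981}$ and $\tilde z^a_{970} := \bar z^a_{970}$ and runs Eq.~\ref{sampling odd} and Eq.~\ref{sampling even} downward. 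The key observation is that Eq.~\ref{invert odd} rewritten as in Eq.~\ref{what related sim} reads
\begin{equation}
\bar z^p_{t-20} = a_{(t\rightarrow t-20)}\bar z^p_t + b_{(t\rightarrow t-20)}\epsilon_\theta(\bar z^a_{t-11},t-11),
\end{equation}
which is \emph{identical in form} to the sampling step Eq.~\ref{sampling odd}; likewise Eq.~\ref{invert even} rewrites to match Eq.~\ref{sampling even}. Hence, by downward induction on the schedule index: if $\tilde z^p_t = \bar z^p_t$ and $\tilde z^a_{t-9} = \bar z^a_{t-9}$ (the auxiliary latent used between $t$ and $t-20$), then the sampling step Eq.~\ref{sampling odd} computes exactly the same right-hand side as the rewritten inversion step, so $\tilde z^p_{t-20} = \bar z^p_{t-20}$; the symmetric statement handles the auxiliary chain. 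Carrying this all the way down to $t-20 = 1$ gives $\tilde z^p_1 = \bar z^p_1$. Crucially, the values $\bar z^p_1$ and $\bar z^a_{10}$ are never \emph{recomputed} during sampling --- they are only ever consumed as the final outputs of the downward recursion --- so however they were obtained (DDIM forward, or anything else), consistency is automatic: the sampling recursion reconstructs whatever the inversion recursion deposited there.

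The step I expect to require the most care is making the base case of the alternating induction airtight, in particular the bookkeeping of which auxiliary latent $\bar z^a_{t-11}$ (equivalently $\bar z^a_{t-9}$ from the auxiliary side) is the one shared between a given pair of consecutive primary time-steps, and checking that at the very first inversion steps --- producing $\bar z^a_{30}$ from $\bar z^p_{21}$ via Eq.~\ref{invert even} with argument $\bar z^p_{21-9} = \bar z^p_{12}$, wait, $\bar z^p_{21}$ is computed from $\bar z^p_1$ and $\bar z^a_{10}$ --- the indices actually line up with the schedules $t=[21,\dots,981]$ and $t=[30,\dots,970]$ stated after Eq.~\ref{invert odd} and Eq.~\ref{invert even}. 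Once the index alignment is confirmed, the argument is a short two-track induction with no real computation. I would therefore structure the proof as: (i) rewrite Eq.~\ref{invert odd} and Eq.~\ref{invert even} into the forms matching Eq.~\ref{sampling odd} and Eq.~\ref{sampling even}; (ii) state the joint inductive hypothesis $\tilde z^p_{t}=\bar z^p_{t}$ and $\tilde z^a_{t'}=\bar z^a_{t'}$ at the current level; (iii) invoke the matching recursions to push it down one level, noting that the DDIM-obtained $\bar z^p_1$, $\bar z^a_{10}$ enter only as passive endpoints; (iv) conclude $\tilde z^p_1 = \bar z^p_1$, which is the claim.
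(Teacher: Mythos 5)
Your proposal is correct and follows essentially the same route as the paper's own proof: both run the main text's downward two-track induction (rewritten inversion step $=$ sampling step, given agreement one level up) all the way to the bottom, with the DDIM-obtained $\bar z^p_1$ and $\bar z^a_{10}$ entering only as passive inputs to the first inversion steps that the final sampling steps then algebraically invert, yielding $\tilde z^a_{10}=\bar z^a_{10}$ and $\tilde z^p_1=\bar z^p_1$. The only blemishes are cosmetic index slips you already flag yourself (the auxiliary latent between primary times $t-20$ and $t$ is $\bar z^a_{t-11}$, not $\bar z^a_{t-9}$, and for Eq.~\ref{invert even} at $t=30$ the argument is $\bar z^p_{30-9}=\bar z^p_{21}$), which do not affect the argument.
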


\begin{proof}
Given $\bar{z}_1^{p}$ and $\bar{z}_{10}^{a}$, we can obtain $\bar{z}_{21}^{p}$ using Eq.~\ref{invert odd}:
\begin{equation}\label{supp:invert2}
\bar{z}_{21}^{p} = a_{(1\rightarrow 21)}\bar{z}_1^{p} + b_{(1\rightarrow 21)} \epsilon_\theta(\bar{z}_{10}^{a},10).
\end{equation}
Similarly, given $\bar{z}_{21}^{p}$ and $\bar{z}_{10}^{a}$, we can obtain $\bar{z}_{30}^{a}$ using Eq.~\ref{invert even}:
\begin{equation}\label{supp:invert}
\bar{z}_{30}^{a} = a_{(10\rightarrow 30)}\bar{z}_{10}^{a} + b_{(10\rightarrow 30)} \epsilon_\theta(\bar{z}_{21}^{p},21).
\end{equation}
By iterating this inversion process, we ultimately obtain:
\begin{equation}
\bar{z}_{981}^{p} = a_{(961\rightarrow 981)}\bar{z}_{961}^{p} + b_{(961\rightarrow 981)} \epsilon_\theta(\bar{z}_{970}^{a},970).
\end{equation}

From the Reversibility Requirement part in Section 4.2, the sampling process from $t=981$ is reversible in the inversion process. Thus, at time steps $t=30$ and $\tau=21$ during the sampling stage, we have $\tilde{z}_{30}^{a}=\bar{z}_{30}^{a}$ and $\tilde{z}_{21}^{p}=\bar{z}_{21}^{p}$. Given $\tilde{z}_{21}^{p}$ and $\tilde{z}_{30}^{a}$, we can compute:
\begin{equation}\label{supp:sample}
\tilde{z}_{10}^{a} = a_{(30\rightarrow 10)}\tilde{z}_{30}^{a} + b_{(30\rightarrow 10)} \epsilon_\theta(\tilde{z}_{21}^{p},21).
\end{equation}
Comparing Eq. \ref{supp:sample} with Eq. \ref{supp:invert}, we have $\tilde{z}_{10}^{a}= \bar{z}_{10}^{a}$ since these two equations are actually the same. 

Finally, given $\tilde{z}_{10}^{a}$ and $\tilde{z}_{21}^{p}$, we obtain:
\begin{equation}\label{supp:sample2}
\tilde{z}_{1}^{p} = a_{(21\rightarrow 1)}\tilde{z}_{21}^{p} + b_{(21\rightarrow 1)} \epsilon_\theta(\tilde{z}_{10}^{a},10).
\end{equation}
Again, comparing Eq. \ref{supp:sample2} with Eq. \ref{supp:invert2}, we have $\tilde{z}_{1}^{p}=\bar{z}_{1}^{p}$, which completes the proof.
\end{proof}

\section{More Reconstruction Examples}
\label{sec:rec} 
Here we provide 10 reconstruction examples across different guidance scales 1, 4, and 7.5 in Fig.~\ref{fig:rec1},~\ref{fig:rec2}, and ~\ref{fig:rec3}.

 \begin{figure*}[t]
    \centering
    \renewcommand*{\thefigure}{S\arabic{figure}}
    \includegraphics[width=\linewidth]{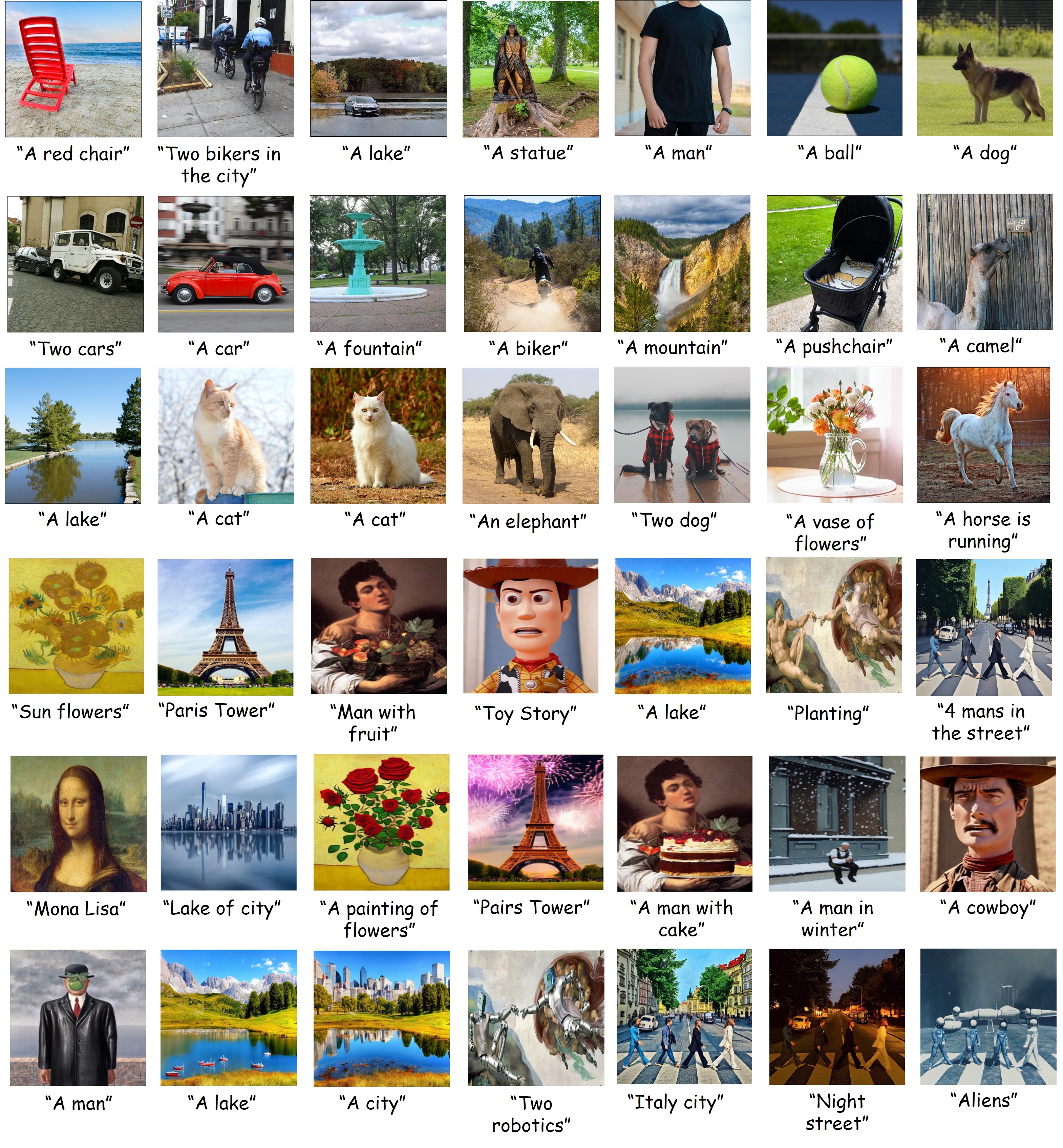}
    \vspace{-24pt}
    \caption{Images of the testing dataset (\textit{Part 1}).}
    \label{fig:data1}
    \vspace{-14pt}
\end{figure*}

 \begin{figure*}[t]
    \centering
    \renewcommand*{\thefigure}{S\arabic{figure}}
    \includegraphics[width=\linewidth]{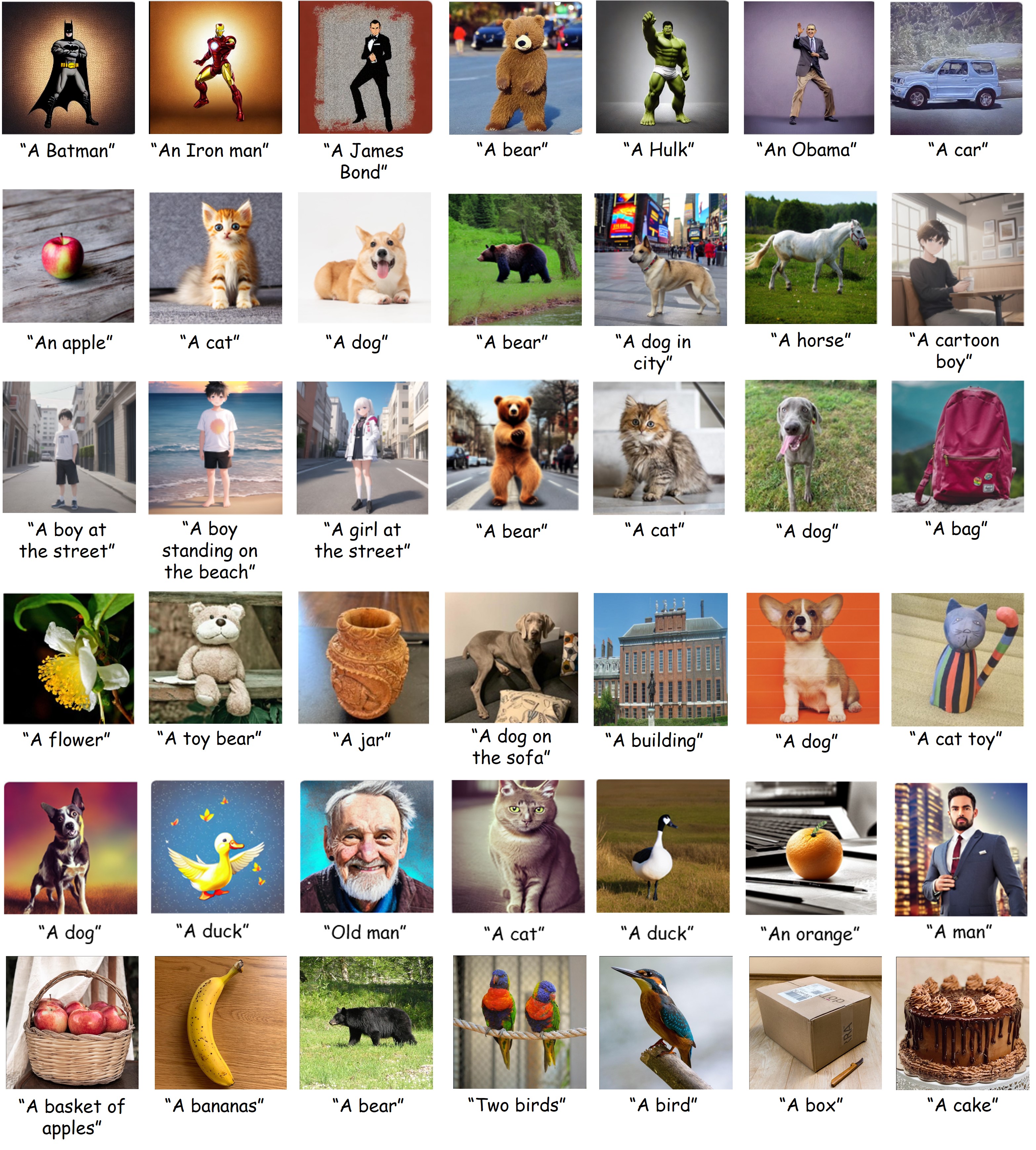}
    \vspace{-24pt}
    \caption{Images of the testing dataset (\textit{Part 2}).}
    \label{fig:data2}
    \vspace{-14pt}
\end{figure*}

 \begin{figure*}[t]
    \centering
    \renewcommand*{\thefigure}{S\arabic{figure}}
    \includegraphics[width=\linewidth]{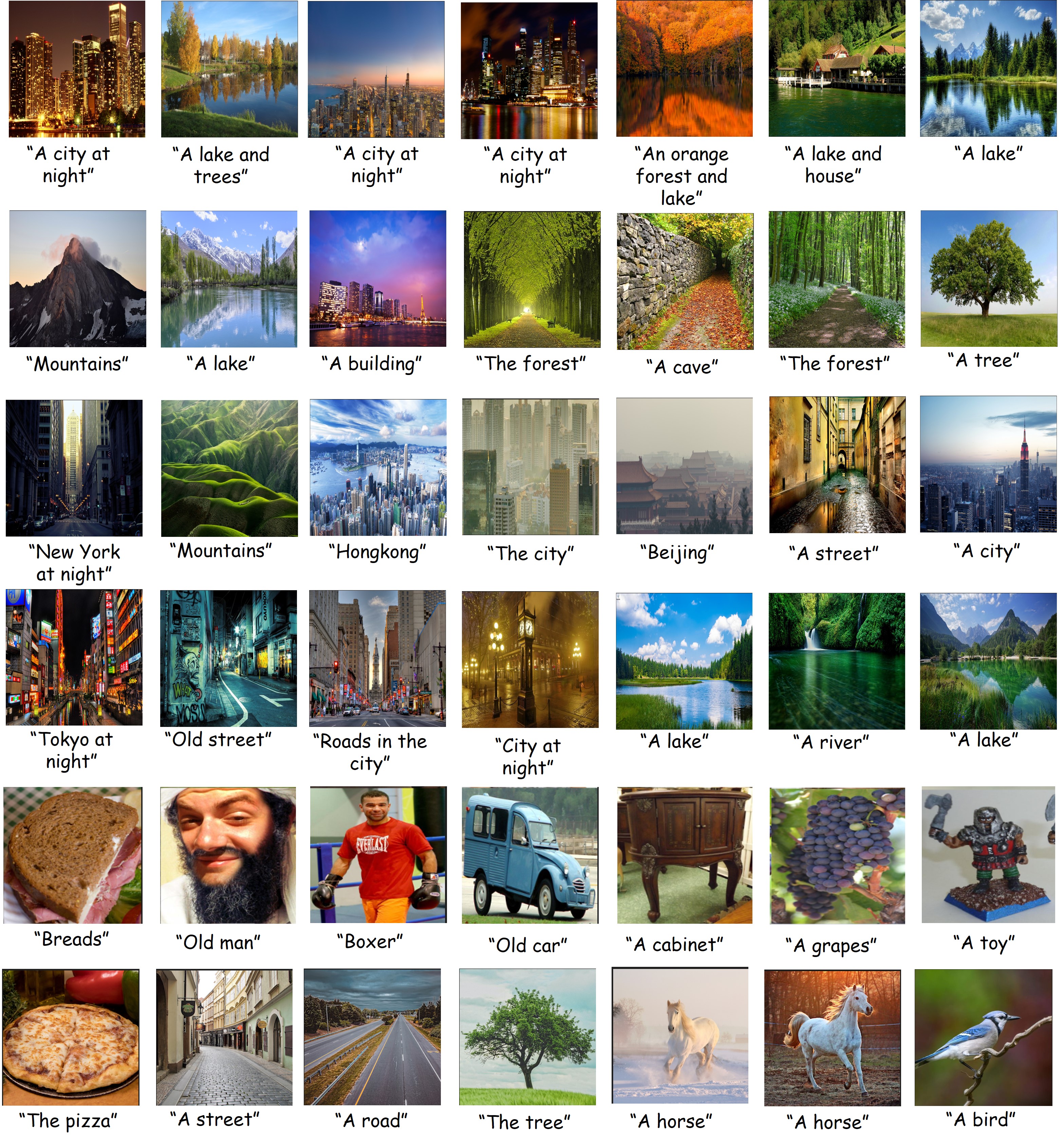}
    \vspace{-24pt}
    \caption{Images of the testing dataset (\textit{Part 3}).}
    \label{fig:data3}
    \vspace{-14pt}
\end{figure*}

 \begin{figure*}[t]
    \centering
    \renewcommand*{\thefigure}{S\arabic{figure}}
    \includegraphics[width=\linewidth]{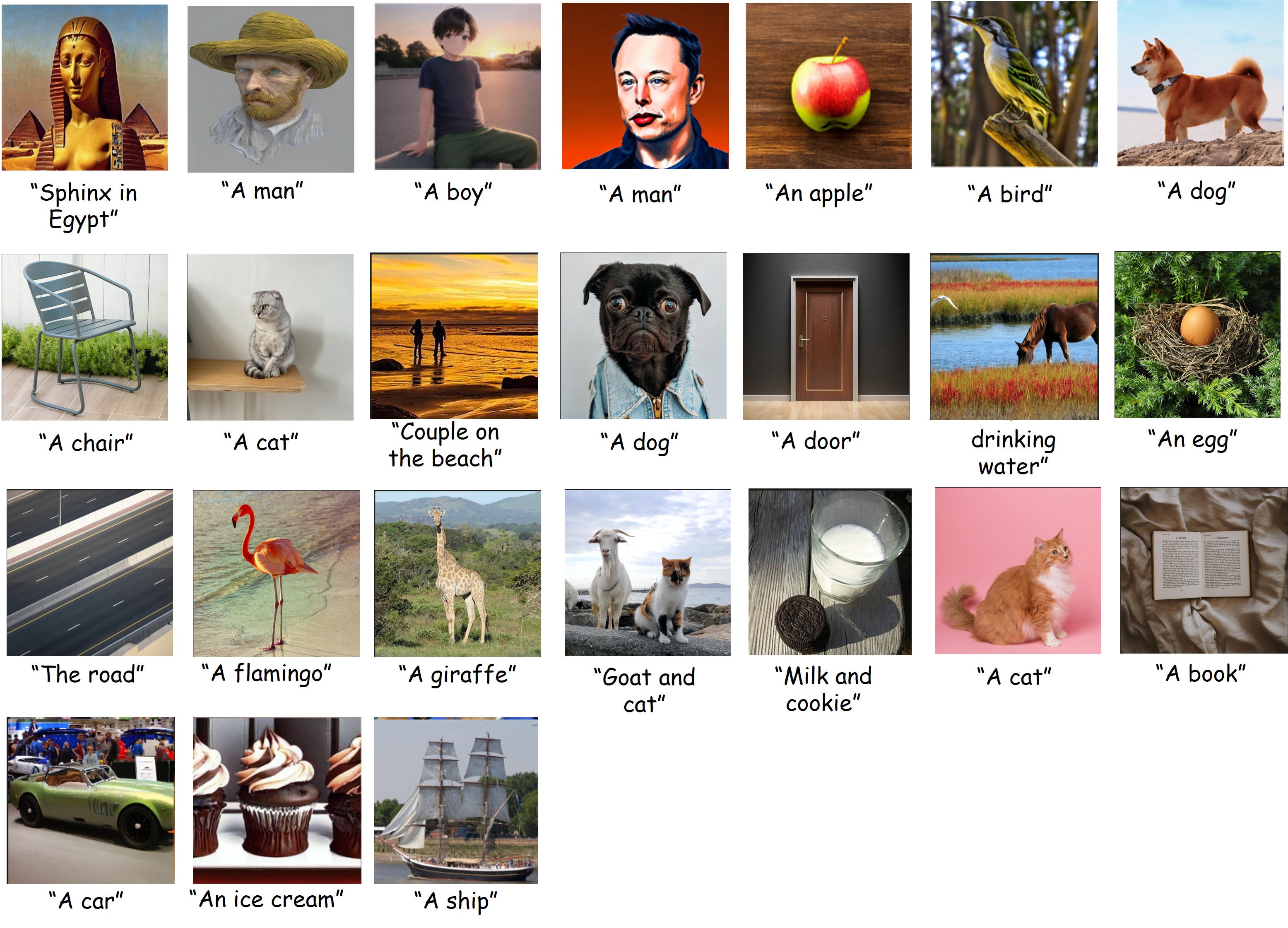}
    \vspace{-20pt}
    \caption{Images of the testing dataset (\textit{Part 4}).}
    \label{fig:data4}
    \vspace{-14pt}
\end{figure*}

 \begin{figure*}[t]
    \centering
    \renewcommand*{\thefigure}{S\arabic{figure}}
    \includegraphics[width=0.64\linewidth]{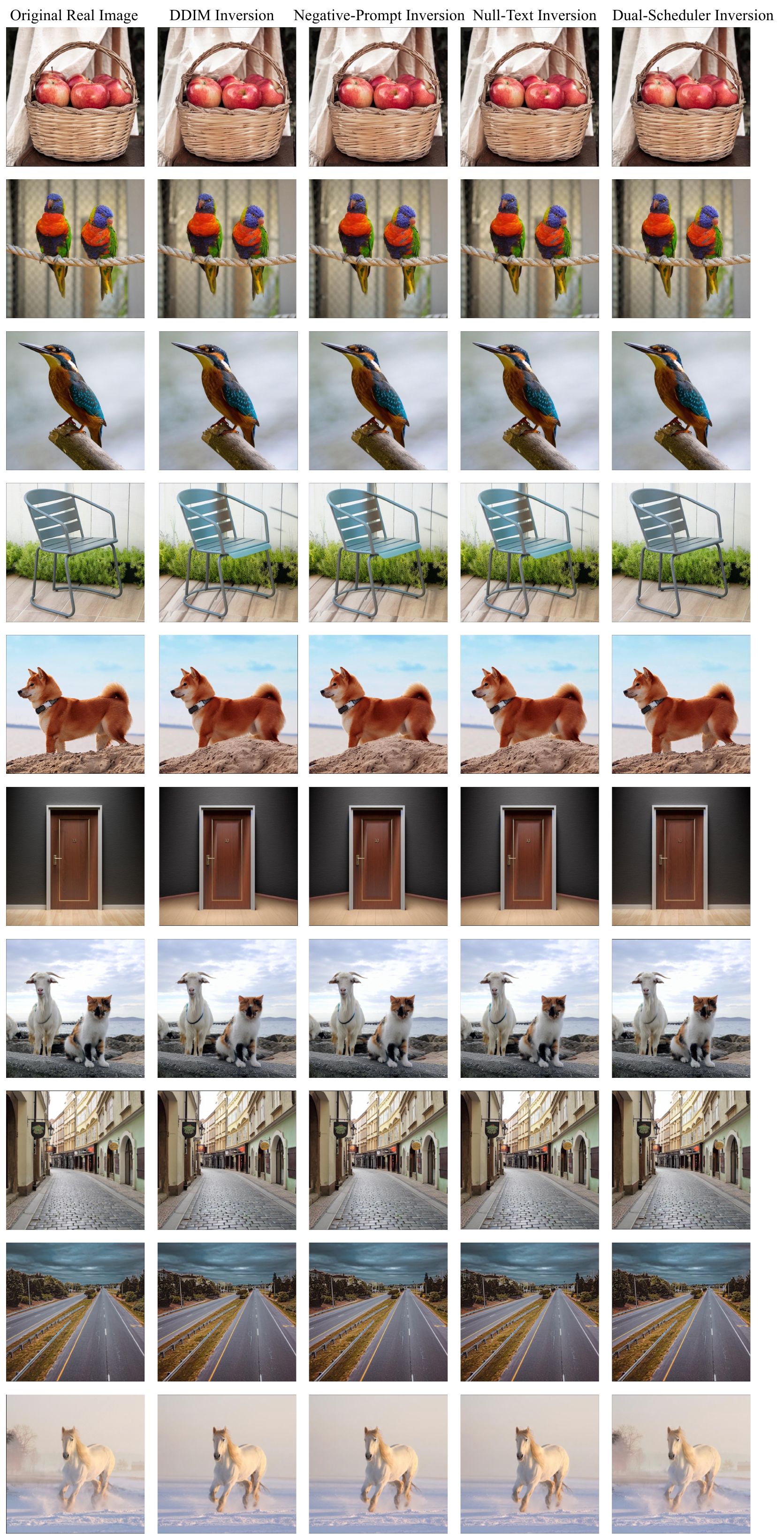}
    \caption{Reconstruction examples (\textit{guidance scale $w=1$}). While Null-Text Inversion requires fine-tuning, the other three methods do not. Dual-Schedule Inversion achieves excellent performance without fine-tuning.}
    \label{fig:rec1}
    \vspace{-14pt}
\end{figure*}

 \begin{figure*}[t]
    \centering
    \renewcommand*{\thefigure}{S\arabic{figure}}
    \includegraphics[width=0.64\linewidth]{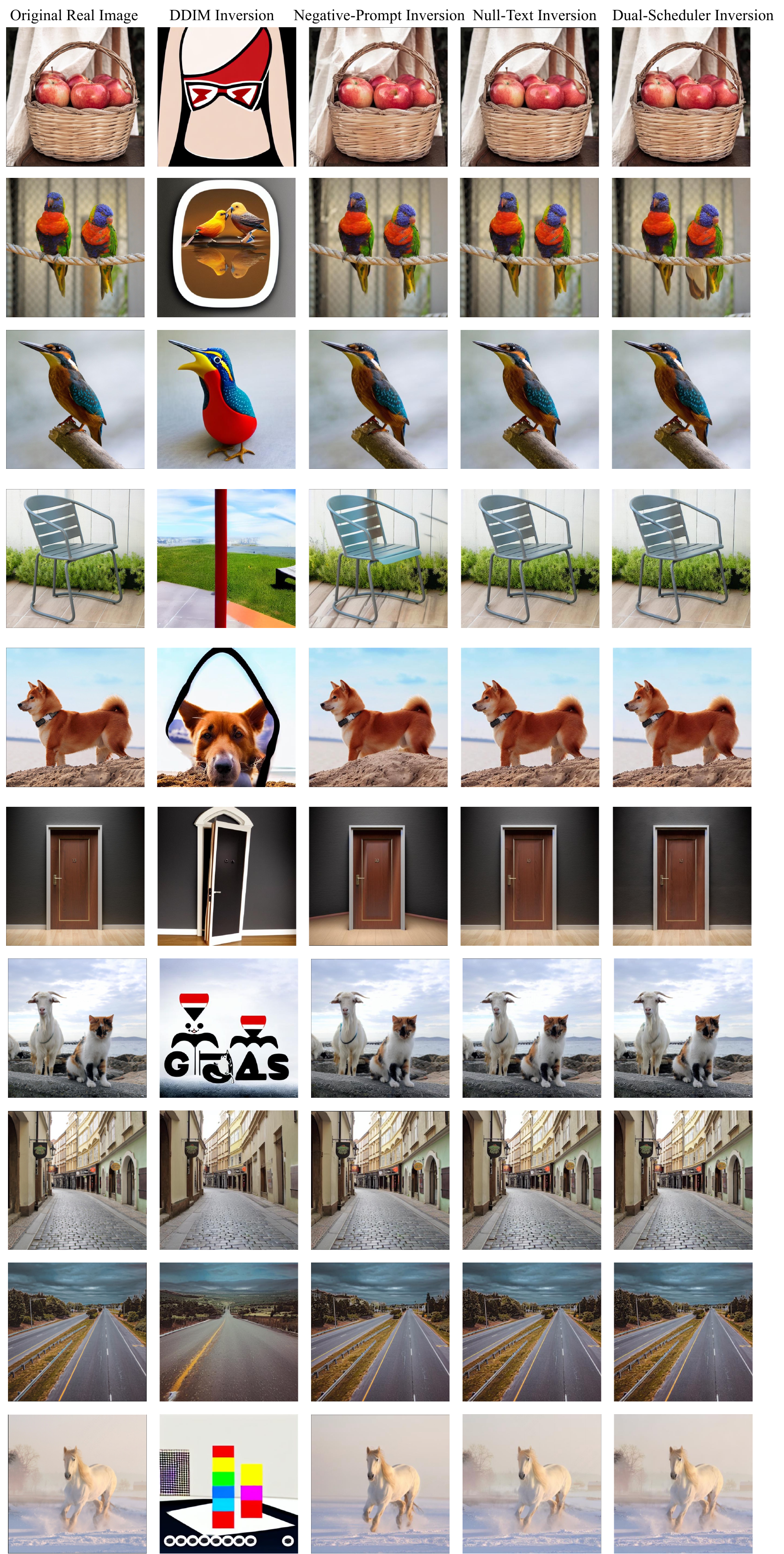}
    \caption{Reconstruction examples (\textit{guidance scale $w=4$}). While Null-Text Inversion requires fine-tuning, the other three methods do not. Dual-Schedule Inversion achieves excellent performance without fine-tuning.}
    \label{fig:rec2}
    \vspace{-14pt}
\end{figure*}

 \begin{figure*}[t]
    \centering
    \renewcommand*{\thefigure}{S\arabic{figure}}
    \includegraphics[width=0.64\linewidth]{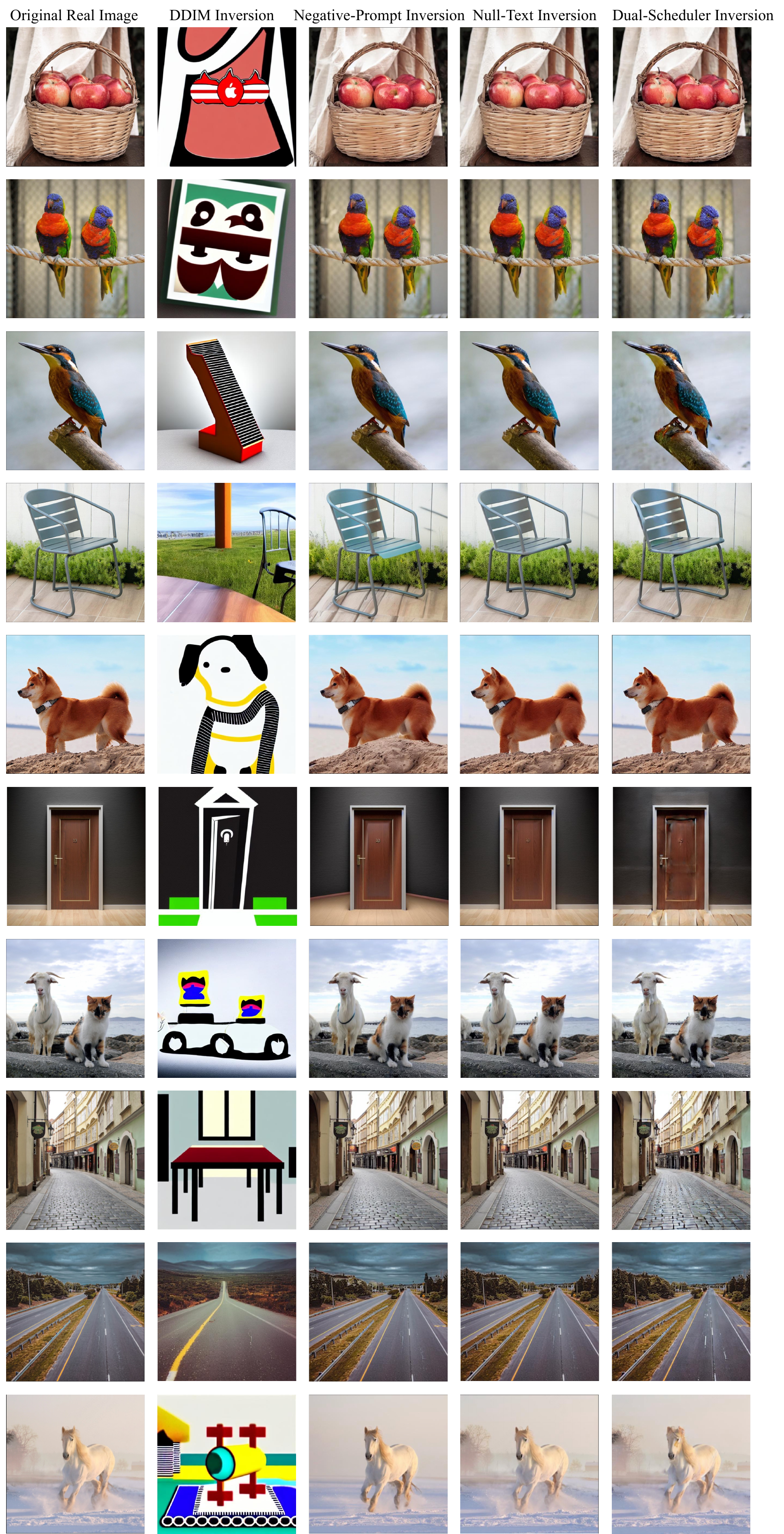}
    \caption{Reconstruction examples (\textit{guidance scale $w=7.5$}). While Null-Text Inversion requires fine-tuning, the other three methods do not. Dual-Schedule Inversion achieves excellent performance without fine-tuning.}
    \label{fig:rec3}
    \vspace{-14pt}
\end{figure*}

\end{document}